\theoremstyle{plain}
\newtheorem{thm}{\protect\theoremname}
\theoremstyle{definition}
\newtheorem{defn}[thm]{\protect\definitionname}
\theoremstyle{plain}
\newtheorem{cor}[thm]{\protect\corollaryname}
\theoremstyle{plain}
\newtheorem{lem}[thm]{\protect\lemmaname}
\title{How Many Samples is a Good Initial Point Worth?}
\author{%
  Gavin Zhang\ \\
  Department of Electrical and Computer Engineering\\
  University of Illinois Urbana Champaign\\
  Illinois, IL61820 \\
  \texttt{jialun2@illinois.edu} \\
  \And
   Richard Y. Zhang\ \\
  Department of Electrical and Computer Engineering\\
  University of Illinois Urbana Champaign\\
  Illinois, IL61820 \\
  \texttt{ryz@illinois.edu} \\
}
\theoremstyle{plain}
\providecommand{\corollaryname}{Corollary}
\providecommand{\definitionname}{Definition}
\providecommand{\theoremname}{Theorem}
\providecommand{\lemmaname}{Lemma}
\providecommand{\corollaryname}{Corollary}
\providecommand{\definitionname}{Definition}
\providecommand{\lemmaname}{Lemma}
\providecommand{\theoremname}{Theorem}
\begin{document}
\title{How Many Samples is a Good Initial Point Worth in Low-rank Matrix
Recovery?}

\maketitle
\global\long\def\M{\mathcal{M}}%
\global\long\def\R{\mathbb{R}}%
\global\long\def\AA{\mathcal{A}}%
\global\long\def\A{\mathbf{A}}%
\global\long\def\orth{\mathrm{orth}}%
\global\long\def\tr{\mathrm{tr}}%
\global\long\def\rank{\mathrm{rank}}%
\global\long\def\LMI{\mathrm{LMI}}%
\global\long\def\HH{\mathcal{H}}%
\global\long\def\X{\mathbf{X}}%
\global\long\def\H{\mathbf{H}}%
\global\long\def\U{\mathbf{U}}%
\global\long\def\mat{\mathrm{mat}}%
\global\long\def\e{\mathbf{e}}%
\global\long\def\vector{\mathrm{vec}\,}%
\global\long\def\ub{\mathrm{ub}}%
\global\long\def\lb{\mathrm{lb}}%
\global\long\def\sp{\mathrm{span}}%
\global\long\def\range{\mathrm{range}}%
\global\long\def\diag{\mathrm{diag}}%
\global\long\def\soc{\mathrm{soc}}%
\global\long\def\foc{\mathrm{foc}}%
\newtheorem{manualtheoreminner}{Theorem} \newenvironment{manualtheorem}[1]{%
  \renewcommand\themanualtheoreminner{#1}%
  \manualtheoreminner
}{\endmanualtheoreminner}

\newtheorem{manualleminner}{Lemma} \newenvironment{manuallem}[1]{%
  \renewcommand\themanualleminner{#1}%
  \manualleminner
}{\endmanualleminner}
\begin{abstract}
Given a sufficiently large amount of labeled data, the non-convex
low-rank matrix recovery problem contains no spurious local minima,
so a local optimization algorithm is guaranteed to converge to a global
minimum starting from any initial guess. However, the actual amount
of data needed by this theoretical guarantee is very pessimistic,
as it must prevent spurious local minima from existing anywhere, including
at adversarial locations. In contrast, prior work based on good initial
guesses have more realistic data requirements, because they allow
spurious local minima to exist outside of a neighborhood of the solution.
In this paper, we quantify the relationship between the quality of
the initial guess and the corresponding reduction in data requirements.
Using the restricted isometry constant as a surrogate for sample complexity,
we compute a sharp “threshold” number of samples needed to prevent
each specific point on the optimization landscape from becoming a
spurious local minimum. Optimizing the threshold over regions of the
landscape, we see that for initial points around the ground
truth, a linear improvement in the quality of the initial guess amounts
to a constant factor improvement in the sample complexity. 
\end{abstract}

\section{Introduction }

A perennial challenge in non-convex optimization is the possible existence
of \emph{bad} or \emph{spurious} critical points and local minima,
which can cause a local optimization algorithm like gradient descent
to slow down or get stuck. Several recent lines of work showed that
the effects of non-convexity can be tamed through a large amount of
diverse and high quality training data~\cite{sun2015complete,bhojanapalli2016dropping,ge2016matrix,boumal2016non,sun2018geometric,li2019non}.
Concretely, these authors showed that, for classes of problems based
on random sampling, spurious critical points and local minima become
progressively less likely to exist with the addition of each new sample.
After a \emph{sufficiently large number of samples}, all spurious
local minima are eliminated, so any local optimization algorithm is
guaranteed to converge to the globally optimal solution starting from
an arbitrary, possibly random initial guess.

This notion of a \emph{global} guarantee---one that is valid starting
from any initial point---is considerably stronger than what is needed
for empirical success to be observed~\cite{chi2019nonconvex}. For
example, the existence of a spurious local minimum may not pose an
issue if gradient descent does not converge towards it. However, a
theoretical guarantee is no longer possible, as starting the algorithm
from the spurious local minimum would result in failure~\cite{zhang2018much}.
As a consequence, these global guarantees tend to be pessimistic,
because the number of samples must be sufficiently large to eliminate
spurious local minima everywhere, even at adversarial locations. By
contrast, the weaker notion of a \emph{local} guarantee~\cite{keshavan2010matrix,jain2013low,netrapalli2014non,sun2016guaranteed,candes2015phase,chen2015solving,tu2016low,li2019rapid}---one
that is valid only for a specified set of initial points---is naturally
less conservative, as it allows spurious local minima to exist outside
of the specified set.

In this paper, we provide a unifying view between the notions of the
global and local guarantees by quantifying the relationship between
the sample complexity and the quality of the initial point. We restrict
our attention to the \emph{matrix sensing} problem, which seeks to
recover a rank-$r$ positive semidefinite matrix $M^{*}=ZZ^{T}\in\R^{n\times n}$
with $Z\in\R^{n\times r}$ from $m$ sub-Gaussian linear measurements
of the form 
\begin{equation}
b\equiv\mathcal{A}(ZZ^{T})\equiv\left[\left\langle A_{1},M^{*}\right\rangle \quad\cdots\quad\left\langle A_{m},M^{*}\right\rangle \right]^{T}\label{A}
\end{equation}
by solving the following non-convex optimization problem: 
\begin{equation}
\min_{X\in\mathbb{R}^{n\times r}}f_{\AA}(X)\equiv\left\Vert \mathcal{A}\left(XX^{T}-ZZ^{T}\right)\right\Vert ^{2}=\sum_{i=1}^{m}\left(\left\langle A_{i},XX^{T}\right\rangle -b_{i}\right)^{2}.\label{p1}
\end{equation}
We characterize a sharp “threshold” on the number of samples $m$
needed to prevent each specific point on the optimization landscape
from becoming a spurious local minimum. While the threshold is difficult
to solve, we derive a lower-bound in closed-form based on spurious
\emph{critical points}, and show that it constitutes a \emph{sharp}
lower-bound on the original threshold of interest. The lower-bound
reveals a simple geometric relationship: a point $X$ is more likely
to be a local minimum if the column spaces of $X$ and $Z$
are close to orthogonal. Optimizing the closed-form lower-bound over
regions of the landscape, we show that for initial points close
to the ground truth, a constant factor improvement of the initial
point amounts to a constant factor reduction in the number of samples
needed to guarantee recovery.

\section{Related Work}

\textbf{Local Guarantees}. The earliest work on exact guarantees for
non-convex optimization focused on generating a good initial guess
within a local region of attraction. For instance, in~\cite{tu2015low,zheng2015convergent},
the authors showed that when $\AA$ satisfies $(\delta,6r)$-RIP with a constant
$\delta\leq1/10$, and there exists a initial point sufficiently close
to the ground truth, then gradient descent starting from this initial point
has a linear convergence rate. The typical strategy to find such the
initial point is \textit{spectral initialization}~\cite{keshavan2010matrix,jain2013low,tu2015low,sun2016guaranteed,candes2015phase,ma2019implicit,chen2015fast}:
using the singular value decomposition on a surrogate matrix to find
low-rank factors that are close to the ground truth.

In this paper, we focus on the trade-off between the quality of an
initial point and the number of samples needed to prevent the existence
of spurious local minima, while sidestepping the question of how it
is found. We note, however, that the number of samples needed to find
an $\epsilon$-good initial guess (e.g. via spectral initialization)
forms an interesting secondary trade-off. It remains a future work
to study the interactions between these two points.

\textbf{Global Guarantees}. Recent work focused on establishing a
global guarantee that is independent of the initial guess~\cite{sun2015complete,bhojanapalli2016dropping,ge2016matrix,boumal2016non,sun2018geometric,li2019non}.
For our purposes, Bhojanapalli et al. \cite{bhojanapalli2016global}
showed that RIP with $\delta_{2r}<1/5$ eliminates all spurious local
minima, while Zhang et al. \cite{zhang2019sharp} refined this to
$\delta_{2r}<1/2$ for the rank-1 case, and showed that this is both
and necessary and sufficient. This paper is inspired by proof techniques
in the latter paper; an important contribution of our paper is generalizing their rank-1
techniques to accommodate for matrices of arbitrary rank.

\section{Our Approach: Threshold RIP Constant}

Previous work that studied the global optimization landscape of problem
\eqref{p1} typically relied on the restricted isometry property (RIP)
of $\AA$. It is now well-known that if the measurement operator $\AA$
satisfies the restricted isometry property with a sufficiently small
constant $\delta<1/5$ then problem \eqref{p1} contains no spurious
local minima; see Bhojanapalli et al.~\cite{bhojanapalli2016global}. 
\begin{defn}[$\delta$-RIP]
Let $\AA:\R^{n\times n}\to\R^{m}$ be a linear measurement operator.
We say that $\AA$ satisfies the $\delta$-\emph{restricted isometry
property }(or simply $\delta$-RIP) if satisfies the following inequality
\[
(1-\delta)\|M\|_{F}^{2}\le\|\AA(M)\|^{2}\le(1+\delta)\|M\|_{F}^{2}\qquad\forall M\in\M_{2r}
\]
where $\M_{2r}=\{X\in\R^{n\times n}:\rank(X)\le2r\}$ denotes the
set of rank-$2r$ matrices. The RIP constant of $\AA$ is the smallest
value of $\delta$ such that the inequality above holds. 
\end{defn}

Let $\delta\in[0,1)$ denote the RIP constant of $\AA$. It is helpful
to view $\delta$ as a surrogate for the number of measurements $m\ge0$,
with a large value of $\delta$ corresponding a smaller value of $m$
and vice versa. For a wide range of sub-Gaussian measurement ensembles,
if $m\geq C_{0}nr/\delta^{2}$ where $C_{0}$ is an absolute constant,
then $\AA$ satisfies $\delta$-RIP with high probability~\cite{candes2009tight,recht2010guaranteed}.

Take $X\in\R^{n\times r}$ to be a \emph{spurious} point such that
$XX^{T}\ne ZZ^{T}$. Our approach in this paper is to define a \emph{threshold}
number of measurements that would be needed to prevent $X$ from becoming
a local minimum for problem \eqref{A}. Viewing the RIP constant $\delta$
as a surrogate for the number of measurements $m$, we follow a construction
of Zhang et al.~\cite{zhang2019sharp}, and instead define a threshold
$\delta_{\soc}(X)$ on the RIP constant $\delta$ that would prevent
$X$ from becoming a local minimum for problem \eqref{A}. Such a
construction must necessarily take into account all choices of $\AA$
satisfying $\delta$-RIP, including those that adversarially target
$X$, bending the optimization landscape into forming a region of
convergence around the point. On the other hand, such adversarial
choices of $\AA$ must necessarily be defeated for a sufficiently
small threshold on $\delta$, as we already know that spurious local
minima cannot exist for $\delta<1/5$. The statement below makes this
idea precise, and also extends it to a set of spurious points. 
\begin{defn}[Threshold for second-order condition]
Fix $Z\in\R^{n\times r}$. For $X\in\R^{n\times r}$, if $XX^{T}=ZZ^{T}$,
then define $\delta_{\soc}(X)=1$. Otherwise, if $XX^{T}\neq ZZ^{T}$,
then define 
\begin{align}
\delta_{\soc}(X)\equiv & \min_{\mathcal{A}}\{\delta:\nabla f_{\AA}(X)=0,\quad\nabla^{2}f_{\AA}(X)\succeq0,\quad\AA\text{ satisfies }\delta\text{-RIP}\}\label{soc_delta}
\end{align}
where the minimum is taken over all linear measurements $\AA:\mathbb{R}^{n\times n}\to\mathbb{R}^{m}$.
For $\mathcal{W}\subseteq\mathbb{R}^{n\times r},$ define $\delta_{\soc}(\mathcal{W})=\inf_{X\in\mathcal{W}}\delta_{\soc}(X).$ 
\end{defn}

If $\delta<\delta_{\soc}(X)$, then $X$ cannot be a spurious local
minimum by construction, or it would contradict the definition of
$\delta_{\soc}(X)$ as the minimum value. By the same logic, if $\delta<\delta_{\soc}(\mathcal{W})$,
then no choice of $X\in\mathcal{W}$ can be a spurious local minimum.
In particular, it follows that $\delta_{\soc}(\R^{n\times r})$ is
the usual \emph{global} RIP threshold: if $\AA$ satisfies $\delta$-RIP
with $\delta<\delta_{\soc}(\R^{n\times r})$, then $f_{\AA}(X)$ is
guaranteed to admit no spurious local minima. Starting a local optimization
algorithm from any initial point guarantees exact recovery of an $X$
satisfying $XX^{T}=ZZ^{T}$.

Now, suppose we are given an initial point $X_{0}$. It is natural
to measure the \emph{quality} of $X_{0}$ by its relative error, as
in $\varepsilon=\|XX^{T}-ZZ^{T}\|_{F}/\|ZZ^{T}\|_{F}$. If we define
an $\varepsilon$-neighborhood of all points with the same relative
error 
\begin{equation}
\mathcal{B}_{\varepsilon}=\{X\in\mathbb{R}^{n\times r},\|XX^{T}-ZZ^{T}\|_{F}\leq\varepsilon\|ZZ^{T}\|_{F}\}
\end{equation}
then it follows that $\delta_{\soc}(\mathcal{B}_{\varepsilon})$ is
an analogous \emph{local} RIP threshold: if $\AA$ satisfies $\delta$-RIP
with $\delta<\delta_{\soc}(\mathcal{B}_{\varepsilon})$, then $f_{\AA}(X)$
is guaranteed to admit no spurious local minima over all $X\in\mathcal{B}_{\varepsilon}$.
Starting a local optimization algorithm from the initial point $X_{0}$
guarantees either exact recovery of an $X$ satisfying $XX^{T}=ZZ^{T}$,
or termination at a strictly worse point $X$ with $\|XX^{T}-ZZ^{T}\|_{F}>\|X_{0}X_{0}^{T}-ZZ^{T}\|_{F}$.
Imposing further restrictions on the algorithm prevents the latter
scenario from occurring (local strong convexity with gradient descent~\cite{sun2016guaranteed},
strict decrements in the levels set \cite{jain2013low,zhang2019sharp,chi2019nonconvex}),
and so exact recovery is guaranteed.

The numerical difference between the global threshold $\delta_{\soc}(\R^{n\times r})$
and the local threshold $\delta_{\soc}(\mathcal{B}_{\varepsilon})$
is precisely the number of samples that an $\varepsilon$-quality
initial point $X_{0}$ is worth, up to some conversion factor. But
two major difficulties remain in this line of reasoning. First, evaluating
$\delta_{\soc}(X)$ for some $X\in\R^{n\times r}$ requires solving
a minimization problem over the set of $\delta$-RIP operators. Second,
evaluating $\delta_{\soc}(\mathcal{B}_{\varepsilon})$ in turn requires
minimizing $\delta_{\soc}(X)$ over all choices of $X$ within an
$\varepsilon$-neighborhood. Regarding the first point, Zhang~et~al.~\cite{zhang2019sharp}
showed that $\delta_{\soc}(X)$ is the optimal value to a \emph{convex}
optimization problem, and can therefore be evaluated to arbitrary
precising using a numerical algorithm. In the rank-1 case, they solved
this convex optimization in closed-form, and use it to optimize over
all $X\in\mathcal{B}_{\varepsilon}$. Their closed-form solution spanned
9 journal pages, and evoked a number of properties specific to the
rank-1 case (for example, $xy^{T}+yx^{T}=0$ implies $x=0$ and $y=0$,
but $XY^{T}+YX^{T}=0$ may hold for $X\ne0$ and $Y\ne0$). The authors
noted that a similar closed-form solution for the general rank-$r$
case appeared exceedingly difficult. While overall proof technique
is sharp and descriptive, its applicability appears to be entirely
limited to the rank-1 case.

\section{Main results}

In this paper, we bypass the difficulty of deriving a closed-form
solution for $\delta_{\soc}(X)$ altogether by adopting a \emph{sharp}
\textit{\emph{lower-bound}}. This is based on two key insights. First,
a spurious local minimum must also be a spurious critical point, so
the analogous threshold over critical points would give an obvious
lower-bound $\delta_{\foc}(X)\le\delta_{\soc}(X)$. 
\begin{defn}[Threshold for first-order condition]
Fix $Z\in\R^{n\times r}$. For $X\in\R^{n\times r}$, if $XX^{T}=ZZ^{T}$,
then define $\delta_{\foc}(X)=1$. Otherwise, if $XX^{T}\neq ZZ^{T}$,
then define 
\begin{align}
\delta_{\foc}(X)\equiv & \min_{\mathcal{A}}\{\delta:\nabla f_{\AA}(X)=0,\quad\AA\text{ satisfies }\delta\text{-RIP}\},\label{Afoc}
\end{align}
where the minimum is taken over all linear measurements $\AA:\mathbb{R}^{n\times n}\to\mathbb{R}^{m}$.
For $\mathcal{W}\subseteq\mathbb{R}^{n\times r},$ define $\delta_{\foc}(\mathcal{W})=\inf_{X\in\mathcal{W}}\delta_{\foc}(X).$ 
\end{defn}

Whereas the main obstacle in Zhang~et~al.~\cite{zhang2019sharp}
is the considerable difficulty in deriving a closed-form solution
for $\delta_{\soc}(X)$, we show in this paper that it is relatively
straightforward to solve $\delta_{\foc}(X)$ in closed-form, to result
in a simple, geometric solution. 
\begin{thm}
\label{main_cor} Fix $Z\in\R^{n\times r}$. Given $\AA$ satisfying
$\delta$-RIP and $X\in\R^{n\times r}$ such that $XX^{T}\ne ZZ^{T}$,
we have $\delta_{\foc}(X)=\cos\theta$, where 
\begin{equation}
\sin\theta=\|Z^{T}(I-XX^{\dag})Z\|_{F}\;\big/\;\|XX^{T}-ZZ^{T}\|_{F}.\label{sin}
\end{equation}
and $X^{\dag}$ denotes the pseudo-inverse of $X$. It follows that if
$\delta<\cos\theta$, then $X$ is not a spurious critical point of
$f_{\AA}(X)$. If $\delta\ge\cos\theta$, then there exists some $\AA^{\star}$
satisfying $\cos\theta$-RIP such that $\nabla f_{\AA}(X)=0$. 
\end{thm}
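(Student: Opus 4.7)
The plan is to first translate the first-order condition into a structural identity on $\AA^{*}\AA(E)$, then establish matching upper and lower bounds on the minimum RIP constant by exploiting a geometric decomposition of $E$. A direct gradient calculation gives $\nabla f_\AA(X)=4\AA^{*}\AA(E)X$ where $E=XX^T-ZZ^T$, so the FOC is equivalent to $\AA^{*}\AA(E)X=0$. Writing $P=XX^{\dag}$ for the orthogonal projector onto $\range(X)$, and using that $\AA^{*}\AA(E)$ is symmetric, this says $\AA^{*}\AA(E)=P^\perp G P^\perp$ for some symmetric $G$. Decompose $E=E_1+E_2+E_3$ by $P$-blocks with $E_1=PEP$, $E_2=PEP^\perp+P^\perp EP$, $E_3=P^\perp E P^\perp$; these three pieces are mutually Frobenius-orthogonal. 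Using $P^\perp X=0$, one finds $E_3=-P^\perp ZZ^T P^\perp$, so $\|E_3\|_F=\|Z^T P^\perp Z\|_F=\sin\theta\,\|E\|_F$, matching the theorem's definition of $\sin\theta$, and hence $\|E_1+E_2\|_F=\cos\theta\,\|E\|_F$.

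For the upper bound $\delta_{\foc}(X)\le\cos\theta$, I would construct an $\AA^\star$ whose restriction to rank-$\le 2r$ matrices satisfies $\AA^{\star*}\AA^\star(E)=E_3$, which annihilates every column of $X$ since $P^\perp X=0$. A natural starting point is a rank-one perturbation $\HH^\star(M)=-\langle M,N\rangle N$ with $N\propto E_1+E_2$; this realizes the FOC but only achieves rank-$2r$ RIP of $1$, because $N$ itself has rank $\le 2r$. To push the RIP down to $\cos\theta$, I would generalize to a symmetric operator of the form $\HH^\star(M)=-\alpha\bigl(\langle M,N_1\rangle N_2+\langle M,N_2\rangle N_1\bigr)$ with $N_1$ aligned with $E$ and $N_2$ tuned so that $\HH^\star(E)=-(E_1+E_2)$, then freely optimize the $E_3$-direction component of $N_2$ (which is unconstrained by the FOC) so that the rank-$2r$ restricted operator norm of $\HH^\star$ drops to exactly $\cos\theta$. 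The final verification reduces to computing $\sup_{M:\,\rank(M)\le 2r}2\alpha\,|\langle M,N_1\rangle\langle M,N_2\rangle|/\|M\|_F^2$, which I expect to evaluate to $\cos\theta$ after the proper calibration.

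For the lower bound $\delta_{\foc}(X)\ge\cos\theta$, the key observation is that $\AA^{*}\AA(E)=P^\perp G P^\perp$ is supported off $\range(X)$, while $E_1+E_2$ has a factor of $P$ on at least one side, so $\langle E_1+E_2,\AA^{*}\AA(E)\rangle=0$. This yields the Pythagorean identity $\|\AA(E_3)\|^2=\|\AA(E)\|^2+\|\AA(E_1+E_2)\|^2$. A direct application of $\delta$-RIP to these three Frobenius norms only yields the weaker bound $\delta\ge\cos^2\theta$; the sharp $\delta\ge\cos\theta$ bound requires refining this with the inner-product form of RIP applied to the rank-$\le r$ splitting $E=XX^T-ZZ^T$ and carefully chosen rank-$\le r$ test matrices aligned with the FOC constraint. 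I expect this final matching step to be the main technical obstacle, and anticipate that the tight bound emerges as the value of an underlying SDP whose primal optimum is attained by the construction above.
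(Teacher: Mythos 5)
Your geometric setup is correct and matches the paper's result: writing $P=XX^{\dag}$, $P^{\perp}=I-P$, and splitting $E=XX^{T}-ZZ^{T}$ by $P$-blocks, the first-order condition is indeed $\AA^{*}\AA(E)X=0$, the range of the Jacobian $Y\mapsto XY^{T}+YX^{T}$ is exactly the symmetric matrices whose $(P^{\perp},P^{\perp})$ block vanishes, and $\|E_{3}\|_{F}/\|E\|_{F}=\|Z^{T}P^{\perp}Z\|_{F}/\|E\|_{F}=\sin\theta$, so your $\cos\theta$ agrees with (\ref{sin}). This is a genuinely different route from the paper, which casts $\delta_{\foc}(X)$ as a convex SDP (via Theorem~8 of Zhang et al.), dualizes it, and obtains both the value and the extremal $\AA^{\star}$ from strong duality rather than by hand.

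However, as written the proposal does not prove the theorem; both halves of the matching bound are left open, and neither gap is easy to close with the tools you have on the table. For the lower bound, your Pythagorean identity $\|\AA(E_{3})\|^{2}=\|\AA(E)\|^{2}+\|\AA(E_{1}+E_{2})\|^{2}$ combined with applying $\delta$-RIP to $E$, $E_{1}+E_{2}$, $E_{3}$ separately gives $(1-\delta)(1+\cos^{2}\theta)\le(1+\delta)\sin^{2}\theta$, i.e.\ $\delta\ge\cos^{2}\theta$, which you correctly note is strictly weaker than $\cos\theta$. The natural way to upgrade this — the inner-product form of RIP, $|\langle\AA(M_{1}),\AA(M_{2})\rangle-\langle M_{1},M_{2}\rangle|\le\delta\|M_{1}\|_{F}\|M_{2}\|_{F}$ applied to $M_{1}=E$ and $M_{2}=E_{1}+E_{2}$, which would give $\cos^{2}\theta\|E\|_{F}^{2}\le\delta\cdot\cos\theta\|E\|_{F}^{2}$ immediately — requires $\delta_{4r}$-RIP, not the $\delta_{2r}$-RIP in the hypothesis, since both $E$ and $E_{1}+E_{2}$ have rank up to $2r$. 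So there is a rank mismatch you would need to resolve, and it is not a small detail. For the upper bound, you propose $\HH^{\star}(M)=-\alpha(\langle M,N_{1}\rangle N_{2}+\langle M,N_{2}\rangle N_{1})$ with $N_{1}\propto E$ and $N_{2}$ tuned in the $E_{3}$ direction, but you do not carry out the calibration of $\alpha$ and $N_{2}$, verify that the FOC is satisfied, or compute the restricted operator norm of $\HH^{\star}$ over $\rank\le 2r$ matrices and show it equals $\cos\theta$. You explicitly say you ``expect'' it to work and that the matching step will be ``the main technical obstacle'' whose value ``emerges as the value of an underlying SDP.'' That acknowledgment is honest, but it is precisely the content of the theorem: the paper's Lemma~\ref{dual} and Lemma~\ref{eigsplit} are what make the SDP solvable in closed form, and Lemma~\ref{costheta} provides the geometric formula. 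Until the two-sided matching is actually established, the proposal is a plausible blueprint rather than a proof.
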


The complete proof of Theorem~\ref{main_cor} is given in Appendix A and a sketch is given in section 5. There
is a nice geometric interpretation: the exact value of $\delta_{\foc}(X)$
depends largely on the\emph{ incidence angle} between the column space
of $X$ and the column space of $Z$. When the angle between $XX^{T}$
and $ZZ^{T}$ becomes small, the projection of $XX^{T}$ onto $ZZ^{T}$
becomes large. As a result, $\sin\theta$ becomes small and $\cos\theta$
becomes large. Therefore, Theorem~\ref{main_cor} says that in regions
where $XX^{T}$ and $ZZ^{T}$ are more aligned, fewer samples are
required to prevent $X$ from becoming a spurious critical point.
In regions where $XX^{T}$ and $ZZ^{T}$ are more orthogonal, a larger
sample complexity is needed. Indeed, these are precisely the adversarial
locations for which a large number of samples are required to prevent
spurious local minima from appearing.

\begin{figure}
\centering \includegraphics[width=0.5\textwidth]{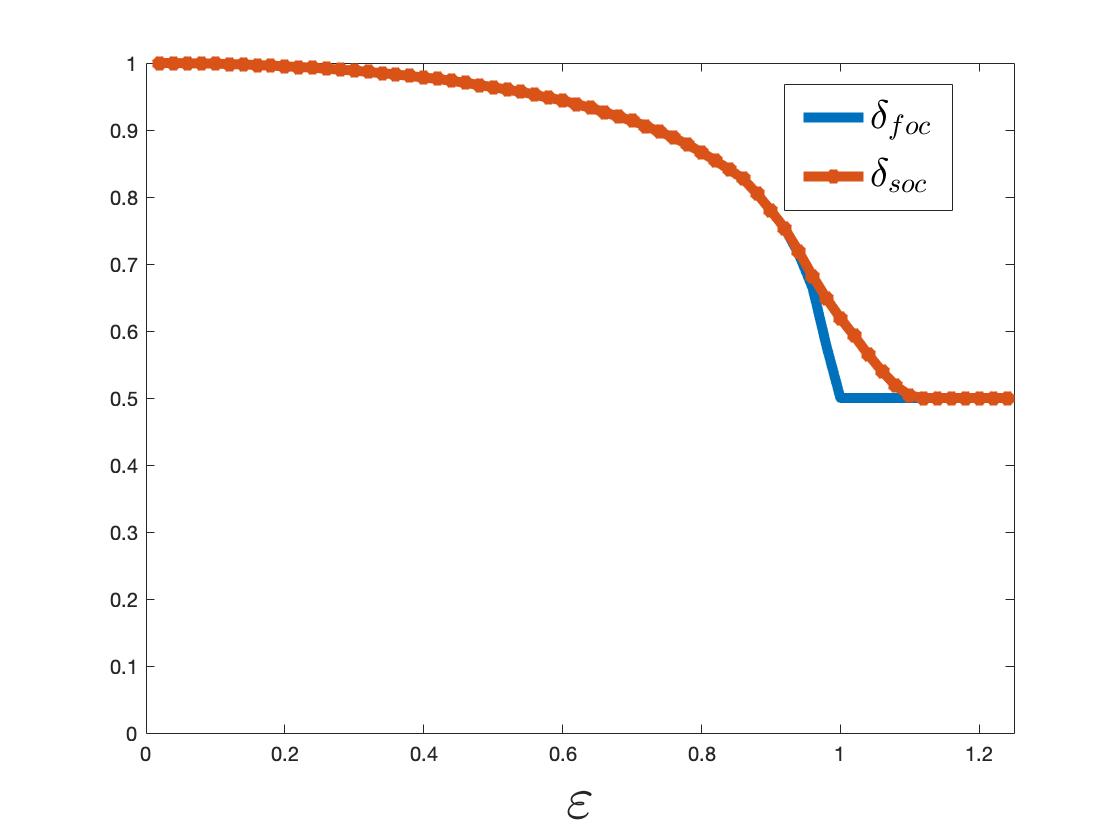} \caption{This paper is motivated by two key insights. First, it is relatively
straightforward to solve $\delta_{\protect\foc}(X)$ in closed-form
(Theorem~\ref{main_cor}). Second, the resulting lower-bound $\delta_{\protect\soc}(X)\ge\max\{\delta_{\protect\foc}(X),\delta^{*}\}$
($\delta^{*}=1/2$ for rank 1 and $\delta^{*}=1/5$ for rank $>1$)
is remarkably tight. This means that $\max\{\delta_{\protect\foc}(\mathcal{B}_{\varepsilon}),\delta^{*}\}$
is a tight lower bound for $\delta_{\protect\foc}(\mathcal{B}_{\varepsilon})$.}
 \label{compare} 
\end{figure}

The lower-bound $\delta_{\foc}(X)\le\delta_{\soc}(X)$ appears conservative,
because critical points should be much more ubiquitous than local
minima over a non-convex landscape. In particular, observe that $\delta_{\foc}(X)=\cos\theta\to0$
as $X\to0$, which makes sense because $X=0$ is a saddle point for
all choices of $\AA$. In other words, for any region $\mathcal{W}$
that contains $0$, the lower-bound becomes trivial, as in $\delta_{\foc}(\mathcal{W})=0<\delta_{\soc}(\mathcal{W})$.
Our second insight here is that we must simultaneously have $\delta_{\soc}(X)\ge1/5$
due to the global threshold of Bhojanapalli et al.~\cite{bhojanapalli2016global}
(or $\delta_{\soc}(x)\ge1/2$ in the rank-1 case due to Zhang et al.~\cite{zhang2019sharp}).
Extending this idea over sets yields the following lower-bound 
\begin{equation}
\delta_{\soc}(\mathcal{W})\ge\max\{\delta_{\foc}(\mathcal{W}),\delta^{*}\}\qquad\text{for all }\mathcal{W}\subseteq\R^{n\times r},\label{eq:central_lb}
\end{equation}
where $\delta^{*}=1/2$ for $r=1$ and $\delta^{*}=1/5>1$. This bound
is \emph{remarkably} tight, as shown in Figure~\ref{compare} for
$\mathcal{W}=\mathcal{B}_{\varepsilon}$ over a range of $\varepsilon$.
Explicitly solving the optimization $\delta_{\foc}(\mathcal{B}_{\varepsilon})=\inf_{X\in\mathcal{B}_{\varepsilon}}\delta_{\foc}(X)$
using Theorem~\ref{main_cor} and substituting into (\ref{eq:central_lb})
yields the following.\footnote{We denote $[x]_{+}=\max\{0,x\}$.} 
\begin{thm}
\label{local} Let $\AA$ satisfy $\delta$-RIP. Then we have $\delta_{\foc}(\mathcal{B}_{\varepsilon})>\sqrt{1-C\varepsilon}$
for all $\epsilon\le1/C$, where $C=\|ZZ^{T}\|_{F}/\sigma_{\min}^{2}(Z)$.
Hence, if 
\begin{equation}
\delta<\max\left\{ \sqrt{[1-C\varepsilon]_{+}},\delta^{*}\right\} 
\end{equation}
where $\delta^{*}=1/2$ if $r=1$ and $\delta^{*}=1/5$ if $r>1$,
then $f_{\AA}(X)$ has no spurious critical point within an $\varepsilon$-neighborhood
of the solution: 
\begin{equation}
\nabla f_{\AA}(X)=0,\quad\|XX^{T}-ZZ^{T}\|_{F}\leq\varepsilon\|ZZ^{T}\|_{F}\quad\Longleftrightarrow\quad XX^{T}=ZZ^{T}.
\end{equation}
\end{thm}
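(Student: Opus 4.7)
The plan is to apply Theorem~\ref{main_cor} and reduce the problem to the uniform estimate $\sin^{2}\theta(X)\le C\varepsilon$ on every $X\in\mathcal{B}_{\varepsilon}$ with $XX^{T}\ne ZZ^{T}$, where $\sin\theta(X)=\|Z^{T}PZ\|_{F}/\|D\|_{F}$, $P=I-XX^{\dag}$, and $D=XX^{T}-ZZ^{T}$. Once this is established, $\cos\theta\ge\sqrt{1-C\varepsilon}$ at every such $X$, and taking the infimum over $\mathcal{B}_{\varepsilon}$ yields the claimed lower bound on $\delta_{\foc}(\mathcal{B}_{\varepsilon})$.

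The driver of the calculation is the trivial identity $PX=0$, which collapses the matrix difference to an expression involving only $Z$: $PD=P(XX^{T})-PZZ^{T}=-PZZ^{T}$. My first step is to translate this into a bound on $\|PZ\|_{F}^{2}$ through the chain
\begin{equation*}
\|D\|_{F}^{2}\;\ge\;\|PD\|_{F}^{2}\;=\;\|PZZ^{T}\|_{F}^{2}\;=\;\tr\bigl((Z^{T}PZ)(Z^{T}Z)\bigr)\;\ge\;\sigma_{\min}^{2}(Z)\,\tr(Z^{T}PZ)\;=\;\sigma_{\min}^{2}(Z)\,\|PZ\|_{F}^{2},
\end{equation*}
the first inequality using $\|P\|_{2}\le1$, and the second using the PSD bound $Z^{T}Z\succeq\sigma_{\min}^{2}(Z)\,I_{r}$ combined with $Z^{T}PZ\succeq0$. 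Thus $\|PZ\|_{F}^{2}\le\|D\|_{F}^{2}/\sigma_{\min}^{2}(Z)$. My next step is simpler: since $Z^{T}PZ$ is positive semidefinite of rank at most $r$, the Frobenius norm is dominated by the trace, giving $\|Z^{T}PZ\|_{F}\le\tr(Z^{T}PZ)=\|PZ\|_{F}^{2}$.

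Chaining the two estimates with the hypotheses $\|D\|_{F}\le\varepsilon\|ZZ^{T}\|_{F}$ and $\varepsilon\le1/C$, I obtain
\begin{equation*}
\sin^{2}\theta\;\le\;\frac{\|PZ\|_{F}^{4}}{\|D\|_{F}^{2}}\;\le\;\frac{\|D\|_{F}^{2}}{\sigma_{\min}^{4}(Z)}\;\le\;\Bigl(\frac{\|ZZ^{T}\|_{F}}{\sigma_{\min}^{2}(Z)}\Bigr)^{2}\varepsilon^{2}\;=\;(C\varepsilon)^{2}\;\le\;C\varepsilon,
\end{equation*}
which is the desired reduction. The concluding ``hence'' statement then follows by combining the resulting lower bound on $\delta_{\foc}(\mathcal{B}_{\varepsilon})$ with the global second-order thresholds $\delta^{*}$ of \cite{bhojanapalli2016global,zhang2019sharp} via the sandwich $\delta_{\foc}\le\delta_{\soc}$ and~\eqref{eq:central_lb}. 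The only non-routine piece is the PSD trace inequality $\tr((Z^{T}PZ)(Z^{T}Z))\ge\sigma_{\min}^{2}(Z)\tr(Z^{T}PZ)$; everything else is bookkeeping once one spots the identity $PD=-PZZ^{T}$.
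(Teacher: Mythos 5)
Your proof is correct, and it takes a genuinely different route from the paper's. The paper proves the key estimate on $\sin^{2}\theta$ (its Lemma 7) by a QR change of basis that puts $X$ and $Z$ in block form $X=[X_{1};0]$, $Z=[Z_{1};Z_{2}]$, then expands $\|XX^{T}-ZZ^{T}\|_{F}^{2}$ into block terms, drops the $\|Z_{1}Z_{1}^{T}-X_{1}X_{1}^{T}\|_{F}^{2}$ contribution, lower-bounds $\sigma_{\min}^{2}(Z_{1})\ge\sigma_{\min}^{2}(Z)-\epsilon$, and arrives at $\sin^{2}\theta\le\epsilon/(2/C-\epsilon)$. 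You bypass the coordinate change entirely: the identity $PD=-PZZ^{T}$ (from $PX=0$) lets you bound $\|PZ\|_{F}^{2}\le\|D\|_{F}^{2}/\sigma_{\min}^{2}(Z)$ via $\|P\|_{2}\le1$ and the trace inequality $\tr((Z^{T}PZ)(Z^{T}Z))\ge\sigma_{\min}^{2}(Z)\tr(Z^{T}PZ)$, and then the PSD Frobenius-vs-trace inequality $\|Z^{T}PZ\|_{F}\le\tr(Z^{T}PZ)$ closes the loop. All steps check out: $P^{T}=P=P^{2}$, $\tr(ZZ^{T}PZZ^{T})=\tr((Z^{T}PZ)(Z^{T}Z))$, and $\tr(Z^{T}PZ)=\|PZ\|_{F}^{2}$. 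Your argument is shorter, avoids the block bookkeeping, and actually gives a tighter intermediate bound: setting $t=C\varepsilon\in(0,1]$, the paper obtains $\sin^{2}\theta\le t/(2-t)$ while you obtain $\sin^{2}\theta\le t^{2}$, and $t^{2}\le t/(2-t)$ on $(0,1]$ since $(t-1)^{2}\ge0$; both relax to the stated $C\varepsilon$ when $\varepsilon\le1/C$. What the paper's more intricate argument buys is a bound valid on the larger range $\varepsilon<2/C$, but this is not needed for the theorem as stated. Your handling of the ``hence'' clause (combining the $\delta_{\foc}$ lower bound with the global thresholds $\delta^{*}$ via \eqref{eq:central_lb}) matches the paper's, with the same level of informality the paper itself uses.
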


The complete proof of this theorem is in Appendix B. Theorem~\ref{local}
says that the number of samples needed to eliminate spurious critical
points within an $\varepsilon$-neighborhood of the solution decreases dramatically as $\varepsilon$ becomes small. Given
that $m\geq C_{0}nr/\delta^{2}$ sub-Gaussian measurements are needed
to satisfy $\delta$-RIP, we can translate Theorem~\ref{local} into
the following sample complexity bound. 
\begin{cor}
\label{gauss}Let $\AA:\mathbb{R}^{n\times n}\to\mathbb{R}^{m}$ be
a sub-Gaussian measurement ensemble. If 
\[
m\geq\min\left\{ \frac{1}{[1-C\varepsilon]_{+}},25\right\} C_{0}nr
\]
then with high probability there are no spurious local minima within
$\mathcal{B}_{\varepsilon}.$ 
\end{cor}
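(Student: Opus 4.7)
The plan is to combine Theorem~\ref{local} with the standard concentration result for sub-Gaussian measurement ensembles, namely that $m \geq C_0 nr / \delta^2$ measurements suffice for $\AA$ to satisfy $\delta$-RIP with high probability (as cited earlier in the paper from~\cite{candes2009tight,recht2010guaranteed}). The translation is essentially algebraic: solve for the largest admissible $\delta$ from Theorem~\ref{local}, invert to find the smallest sufficient $m$, and then observe that the spurious critical point guarantee automatically rules out spurious local minima since every local minimum satisfies $\nabla f_{\AA}(X) = 0$.

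Concretely, I would proceed as follows. First, I would note that Theorem~\ref{local} gives the sufficient condition
\[
\delta < \max\left\{\sqrt{[1-C\varepsilon]_+},\ \delta^*\right\}
\]
for there to be no spurious critical points of $f_{\AA}$ inside $\mathcal{B}_{\varepsilon}$, and in particular no spurious local minima. Second, I would pick $\delta$ equal to (or arbitrarily close to) this threshold, so that
\[
\frac{1}{\delta^2} \;=\; \frac{1}{\max\{[1-C\varepsilon]_+,\ (\delta^*)^2\}} \;=\; \min\left\{\frac{1}{[1-C\varepsilon]_+},\ \frac{1}{(\delta^*)^2}\right\}.
\]
Substituting $\delta^* = 1/5$ (so $1/(\delta^*)^2 = 25$) yields $1/\delta^2 = \min\{1/[1-C\varepsilon]_+,\ 25\}$. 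Plugging this into the sub-Gaussian RIP requirement $m \geq C_0 nr / \delta^2$ gives the stated sample complexity. Third, I would invoke the high-probability event that $\AA$ satisfies $\delta$-RIP, under which Theorem~\ref{local} applies, so that with high probability $\mathcal{B}_{\varepsilon}$ contains no spurious local minima.

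There is essentially no serious obstacle here; the statement is a packaging corollary whose only nontrivial ingredient, Theorem~\ref{local}, is already in hand. The only minor care required is to justify that a strict inequality $\delta < \max\{\sqrt{[1-C\varepsilon]_+},\delta^*\}$ can be replaced by a non-strict one in the sample-complexity expression, which is handled by absorbing a harmless constant slack into the absolute constant $C_0$ (or equivalently by taking $\delta$ infinitesimally below the threshold, which changes $1/\delta^2$ only by an arbitrarily small amount). Finally, I would remark that the $\min$ structure captures exactly the two regimes of the paper: when $\varepsilon$ is small (good initial guess), $1/[1-C\varepsilon]_+$ is the active term and the local sample complexity is dramatically better than the global one; when $\varepsilon$ is large, the constant $25$ takes over and we recover the usual global bound of~\cite{bhojanapalli2016global}.
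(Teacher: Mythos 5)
Your proposal is correct and follows essentially the same route the paper takes, which is just the one-sentence observation that the corollary follows from Theorem~\ref{local} together with the standard $m \gtrsim C_0 nr/\delta^2$ sample-complexity requirement for $\delta$-RIP; your explicit algebra inverting $\max\{[1-C\varepsilon]_+,(\delta^*)^2\}$ into $\min\{1/[1-C\varepsilon]_+,25\}$ and the remark that spurious local minima are in particular spurious critical points are exactly the intended (and omitted) details.
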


The proof of Corollary~6 follows immediately from Theorem~\ref{local} combined with the direct relationship between the RIP-property and the sample complexity for sub-Gaussian measurement ensembles. 
We see that the relationship
between the quality of the initial point and the number of samples
saved is essentially \emph{linear}. Improving the quality of the initial point by a linear factor corresponds to a linear decrease in sample complexity. Moreover, the rate of improvement depends on the constant $C$. This shows that in the non-convex setup of
matrix sensing, there is a significant difference between a good initial
point and a mediocre initial point. In the case that $C=\|ZZ^{T}\|_{F}/\sigma_{\min}^{2}(Z)$ is large, this difference is even more pronounced. 
\section{Proof of Main Results}

\subsection{Notation and Definitions}

We use $\|\cdot\|$ for the vector $2$-norm and use $\|\cdot\|_{F}$
to denote the Frobenius norm of a matrix. For two square matrices
$A$ and $B$, $A\succeq B$ means $B-A$ is positive semidefinite.
The trace of a square matrix $A$ is denoted by $\mathrm{tr}(A)$.
The \emph{vectorization} $\vector(A)$ is the length-$mn$ vector
obtained by stacking the columns of $A$. Let $\AA:\R^{n\times n}\to\R^{m}$
be a linear measurement operator, and let $Z\in\R^{n\times r}$ be
a fixed ground truth matrix. We define $\A=[\vector(A_{1}),\ldots,\vector(A_{m})]$
as the matrix representation of $\AA$, and note that $\vector[\AA(X)]=\A\,\vector(X)$.
We define the error vector $\e$ and its Jacobian $\X$ to satisfy
\begin{subequations} \label{eX}
\begin{align}
\e & =\vector(XX^{T}-ZZ^{T})\\
\X\,\vector(Y) & =\vector(XY^{T}+YX^{T})\qquad\text{for all }Y\in\R^{n\times r}.
\end{align}
\end{subequations}

\subsection{Proof Sketch of Theorem 4}

A complete proof of Theorem 4 relies on a few technical lemmas, so
we defer the complete proof to Appendix A. The key insight is that
$\delta_{\foc}(X)$ is the solution to a \textit{convex} optimization
problem, which we can solve in closed-form. At first sight, evaluating
$\delta_{\foc}(X)$ seems very difficult as it involves solving an
optimization problem over the set of $\delta$-RIP operators, as defined
in equation \ref{Afoc} . However, a minor modification of Theorem
8 in Zhang et al. \cite{zhang2019sharp} shows that $\delta_{\foc}(X)$
can be reformulated as a convex optimization problem of the form 
\begin{equation}
\eta(X)\quad\equiv\quad\max_{\eta,\H}\left\{ \eta\quad:\quad\X^{T}\H\e=0,\quad\eta I\preceq\H\preceq I\right\} .\label{eq:prim_1}
\end{equation}
where $\eta(X)$ is related to $\delta_{\foc}(X)$ by 
\begin{equation}
\delta_{\foc}(X)=\frac{1-\eta(X)}{1+\eta(X)}.\label{eq:delta_eta}
\end{equation}

We will show that problem (\ref{eq:prim_1}) actually has a simple
closed-form solution. First, we write its Lagrangian dual as 
\begin{align}
\underset{y,U_{1},U_{2}}{\text{ minimize }}\quad & \tr(U_{2})\label{eq:dual_1}\\
\text{subject to }\quad & (\X y)\e^{T}+\e(\X y)^{T}=U_{1}-U_{2}\nonumber \\
 & \tr(U_{1})=1,\quad U_{1},U_{2}\succeq0.\nonumber 
\end{align}
Notice that strong duality holds because Slater's condition is trivially
satisfied by the dual: $y=0$ and $U_{1}=U_{2}=2I/n(n+1)$ is a strictly
feasible point. It turns out that the dual problem can be rewritten
as an optimization problem over the eigenvalues of the matrix $(\X y)\e^{T}+\e(\X y)^{T}$.
The proof of this in in Appendix A.

For any $\alpha\in\mathbb{R}$ we denote $[\alpha]_{+}=\max\{0,+\alpha\}$
and $[\alpha]_{-}=\max\{0,-\alpha\}$. The dual problem can be written
as 
\[
\min_{y}\frac{\tr{[M(y)]_{-}}}{\tr{[M(y)]_{+}}}=\min_{y}\frac{\sum_{i}{\lambda_{i}[M(y)]_{-}}}{\sum_{i}{\lambda_{i}[M(y)]_{+}}},\quad\mathrm{where}\quad M(y)=(\X y)\e^{T}+\e(\X y)^{T},
\]
and $\lambda_{i}[M(y)]$ denotes the eigenvalues of the rank-2 matrix
$M(y)$. It is easy to verify that the only two non-zero eigenvalues
of $(\X y)\e^{T}+\e(\X y)^{T}$ are 
\[
\|\mathbf{X}y\|\|\e\|\left(\cos\theta_{y}\pm1\right),\quad\text{ where }\cos\theta_{y}=\frac{\e^{T}\mathbf{X}y}{\|\e\|\|\mathbf{X}y\|}.
\]
It follows that 
\[
\eta(X)=\min_{y}\frac{1-\cos\theta_{y}}{1+\cos\theta_{y}}
\]
and therefore 
\[
\delta_{\foc}(X)=\max_{y}\cos\theta_{y}=\max_{y}\frac{\e^{T}\mathbf{X}y}{\|\e\|\|\mathbf{X}y\|}.
\]
Let $y^{*}$ be the optimizer of the optimization problem above, then
$\theta_{y^{*}}$ is simply the incidence angle between the column space of $X$ and the error
vector $\e$. Thus we have $y^{\star}=\arg\min_{y}\|\e-\X y\|.$ Using
Lemma \ref{costheta} in Appendix A, we show that solving for $y^{*}$
yields a closed-form expression for $\theta_{y^{*}}$ in the form
\[
\sin\theta_{y^{*}}=\frac{\|Z^{T}(I-XX^{\dag})Z\|_{F}}{\|XX^{T}-ZZ^{T}\|_{F}}.
\]

Hence we have $\delta_{\foc}(X)=\cos\theta$, with $\theta = \theta_{y^*}$ given
by the equation above.

\subsection{Proof of Theorem \ref{local}}

The proof of Theorem \ref{local} is based on the following lemma.
Its proof is very technical and can be can be found in Appendix B. 
\begin{lem}
Let $Z\ne0$ and suppose that $\|XX^{T}-ZZ^{T}\|_{F}\le\epsilon\|ZZ\|_{F}^{2}$.
Then 
\[
\sin^2\theta=\frac{\|Z^{T}(I-XX^{\dagger})Z\|_{F}^2}{\|XX^{T}-ZZ^{T}\|_{F}^2}\le\frac{\epsilon}{2\sigma_{\min}^{2}(Z)/\|ZZ^{T}\|_{F}-\epsilon}.
\]
\label{sintheta_bd}
\end{lem}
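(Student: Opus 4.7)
The plan is to let $P = I - XX^{\dagger}$ be the orthogonal projection onto $(\range X)^{\perp}$, so that $P^{2}=P=P^{T}$ and $PX=0$. Setting $E = XX^{T} - ZZ^{T}$, the hypothesis reads $\|E\|_{F}\le\epsilon\|ZZ^{T}\|_{F}$. I intend to bound the numerator $\|Z^{T}PZ\|_{F}$ by $\|E\|_{F}$ and $\sigma_{\min}(Z)$ through a short chain of matrix-algebra steps, and then close the argument with a single algebraic inequality that reshapes $(\epsilon/s)^{2}$ into the claimed $\epsilon/(2s-\epsilon)$, where $s := \sigma_{\min}^{2}(Z)/\|ZZ^{T}\|_{F}$.

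First, because $PX=0$ gives $PXX^{T}=0$, we have $PZZ^{T}=-PE$, hence $\|PZZ^{T}\|_{F}\le\|E\|_{F}$. Second, the cyclic property of trace rewrites
\[
\|PZZ^{T}\|_{F}^{2}=\tr((Z^{T}PZ)(Z^{T}Z)),
\]
and the standard inequality $\tr(AB)\ge\lambda_{\min}(A)\tr(B)$ for positive semidefinite $A,B$, applied with $A=Z^{T}Z$ (whose smallest eigenvalue is $\sigma_{\min}^{2}(Z)$), yields $\sigma_{\min}^{2}(Z)\|PZ\|_{F}^{2}\le\|PZZ^{T}\|_{F}^{2}\le\|E\|_{F}^{2}$. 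Third, since $Z^{T}PZ=(PZ)^{T}(PZ)$ is positive semidefinite, its Frobenius norm is dominated by its trace, giving
\[
\|Z^{T}PZ\|_{F}\le\tr(Z^{T}PZ)=\|PZ\|_{F}^{2}\le\|E\|_{F}^{2}/\sigma_{\min}^{2}(Z).
\]

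Squaring the last display and dividing by $\|E\|_{F}^{2}$ produces $\sin^{2}\theta\le\|E\|_{F}^{2}/\sigma_{\min}^{4}(Z)\le(\epsilon/s)^{2}$. The stated form $\sin^{2}\theta\le\epsilon/(2s-\epsilon)$ then follows from the elementary inequality $(\epsilon/s)^{2}\le\epsilon/(2s-\epsilon)$, which for $\epsilon<2s$ rearranges to $(s-\epsilon)^{2}\ge0$. The only non-routine ingredient is the trace inequality $\tr(AB)\ge\lambda_{\min}(A)\tr(B)$ for positive semidefinite matrices, which follows by diagonalizing $A=U\Lambda U^{T}$ and noting that the diagonal entries of $U^{T}BU$ are non-negative; I do not expect this to be a genuine obstacle. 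Note that the proposed chain in fact yields a strictly tighter bound than the one stated, and I suspect the author uses the looser form purely to streamline the algebra of Theorem~\ref{local}.
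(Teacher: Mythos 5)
Your proof is correct, and it takes a genuinely different route from the paper. The paper's argument first rotates into a coordinate system via a QR factorization of $[X,\,Z]$, writes $X=\bigl[\begin{smallmatrix}X_1\\0\end{smallmatrix}\bigr]$, $Z=\bigl[\begin{smallmatrix}Z_1\\Z_2\end{smallmatrix}\bigr]$, then expands $\|XX^T-ZZ^T\|_F^2$ blockwise, lower-bounds the cross-term $\|Z_1Z_2^T\|_F^2$ by $\sigma_{\min}^2(Z_1)\|Z_2\|_F^2$, separately bounds $\sigma_{\min}^2(Z_1)\ge\sigma_{\min}^2(Z)-\epsilon$ via a Weyl-type eigenvalue perturbation, and finally threads these through a chain of ratio inequalities. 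Your proof avoids all of that: it works directly and coordinate-free with the projector $P=I-XX^\dagger$, observes $PZZ^T=-PE$, and then uses two clean trace facts --- $\tr(AB)\ge\lambda_{\min}(A)\tr(B)$ for psd $A,B$, and $\|A\|_F\le\tr(A)$ for psd $A$ --- to get $\|Z^TPZ\|_F\le\|PZ\|_F^2\le\|E\|_F^2/\sigma_{\min}^2(Z)$ in three lines.

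One remark worth flagging, since you may be underselling it: you arrive at the intermediate bound $\sin^2\theta\le(C\epsilon)^2$ with $C=\|ZZ^T\|_F/\sigma_{\min}^2(Z)$, which is not merely ``a strictly tighter bound [used] to streamline the algebra'' --- it is \emph{qualitatively} stronger than what the paper proves. The paper's bound $\epsilon/(2/C-\epsilon)$ is $\Theta(\epsilon)$ as $\epsilon\to0$, whereas yours is $\Theta(\epsilon^2)$. The paper's loss occurs when it passes from the intermediate inequality
\[
\sin^2\theta\le\frac{\|Z_2Z_2^T\|_F}{2\sigma_{\min}^2(Z_1)+\|Z_2Z_2^T\|_F}
\]
to the final form by replacing $\|Z_2Z_2^T\|_F$ with $\epsilon$; in fact $\|Z_2Z_2^T\|_F=\|PZ\|_F^2$ is $O(\epsilon^2)$, as your step~3 shows. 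If your bound were propagated through Theorem~\ref{local}, the paper's conclusion $\delta_{\foc}(\mathcal{B}_\varepsilon)\ge\sqrt{1-C\varepsilon}$ would improve to $\delta_{\foc}(\mathcal{B}_\varepsilon)\ge\sqrt{1-(C\varepsilon)^2}$, and the headline ``linear improvement in $\varepsilon$ yields linear savings in samples'' would become quadratic in the small-$\varepsilon$ regime. That is a substantive observation, not a cosmetic one.
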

To prove Theorem 5, we simply set $C_{1}=\sigma_{\min}^{2}(Z)/\|ZZ^{T}\|_{F}$
and write
\[
\cos\theta=\sqrt{1-\sin^{2}\theta}\ge\sqrt{1-\frac{\epsilon}{2C_{1}-\epsilon}}.
\]
It is easy to see that $\frac{\epsilon}{2C_{1}-\epsilon}$
is dominated by the linear function $\varepsilon/C_1$ so long as $\epsilon\le C_{1}$. This follows directly from the fact that $\frac{\epsilon}{2C_{1}-\epsilon}$ is convex between $0$ and $C_1$. Thus we have
\[
\cos\theta \ge\sqrt{1-\frac{\epsilon}{C_1}}
\] Since this lower bound holds for all $X$ in $\mathcal{B}_\varepsilon$,  it follows that $\delta_{\foc}(\mathcal{B}_{\varepsilon})\ge\sqrt{1-\varepsilon/C_{1}}$. 

\section{Numerical Results}

In this section we give a geometric interpretation for Theorem \ref{main_cor}, which we already alluded to in section 4: the sample complexity to eliminate spurious critical points is small in regions where the column spaces of $X$ and $Z$ are more aligned and large in regions where they are orthogonal.
We also numerically verify that $\delta_{\foc}(X)$ is
a tight lower bound for $\delta_{\soc}(X)$ for a wide range of $\varepsilon$,
providing numerical evidence that the bound in Theorem \ref{local}
is tight. 
 
Our main results and geometric insights hold for \textit{any rank}, but for ease of visualization
we focus on the rank-1 case where $x$ and $z$ are now just vectors.
To measure the alignment between the column space of $x$ and that of $z$ in the rank-1 case , we define the length ratio and the incidence angle as 
\[
\rho=\frac{\|x\|}{\|z\|},\quad\cos\phi=\frac{x^{T}z}{\|x\|\|z\|}.
\]
 Our goal is to plot how sample complexity depends on this alignment.  Visualizing the dependence of sample complexity on $\rho$ and $\cos\phi$ is particularly easy in rank-1 because these two parameters completely determine the values
of both $\delta_{\foc}(x)$ and $\delta_{\soc}(x)$. See \cite{zhang2019sharp}
section 8.1 for a proof of this fact. This allows us to plot the level
curves of $\delta_{\foc}(x)$ and $\delta_{\soc}(x)$ over the parameter
space $\rho$ and $\phi$ in Figure \ref{levels}. This is shown by
the blue curves. Since we are particularly interested in sample complexity near the ground truth, we also plot the level sets of the function
${\|xx^{T}-zz^{T}\|_{F}}/{\|zz^{T}\|_{F}}$ using red curves. The horizontal axis is
the value of $\rho\cos\phi$ and the vertical axis is the value of
$\rho\sin\phi$.

We can immediately see that in regions in the optimization landscape
where $x$ is more aligned with $z$, i.e., when $\sin\phi$ is small, 
the values of both threshold functions tend to be high and a relatively
small number of samples suffices to prevent $x$ from becoming a spurious
critical point. However, when $x$ and $z$ becomes closer to being
orthogonal, i.e., when $\cos\phi$ is close to $0$, then $\delta_{\foc}(x)$ becomes arbitrarily small, and $\delta_{\soc}(x)$
also becomes smaller, albeit to a lesser extent. As a result, preventing
$x$ from becoming a spurious critical point (or spurious local minima)
in these regions require many more samples. This intuition also permeates to the high-rank case, even though visualization
becomes difficult, and a slightly more general definition of length ratio and alignment is required. 
Similar to the rank-1 case,  in regions where $XX^{T}$
and $ZZ^{T}$ are more aligned, the sample complexity required to
eliminate spurious critical points is small and in regions where $XX^T$ and $ZZ^T$ are close to orthogonal, a small sample complexity is required. 

Regarding the tightness of using $\delta_\foc(X)$ as a lower bound for $\delta_\soc(X)$, note that if we look at the level sets of ${\|xx^{T}-zz^{T}\|_{F}}/{\|zz^{T}\|_{F}}$,
we see that in regions close to the ground truth, both $\delta_{\soc}(x)$
and $\delta_{\foc}(x)$ are very close to $1$. This is in perfect
agreement with our results in Theorem \ref{local}, where we showed
that a small $\varepsilon$ results in a large $\delta_{\foc}(\mathcal{B}_\varepsilon)$.
Moreover, the shapes of the level curves of $\delta_{\soc}$ and $\delta_{\foc}$
that flow through the regions near the ground truth are almost identical.
This indicates that for a large region near the ground truth, the
second-order condition, i.e., the hessian being positive semidefinite,
is inactive. This is the underlying mechanism that causes $\delta_{\foc}$
to be a tight lower bound for $\delta_{\foc}$.

\begin{figure}[h!]
\centering \subfloat[]{ \includegraphics[width=65mm,height=60mm]{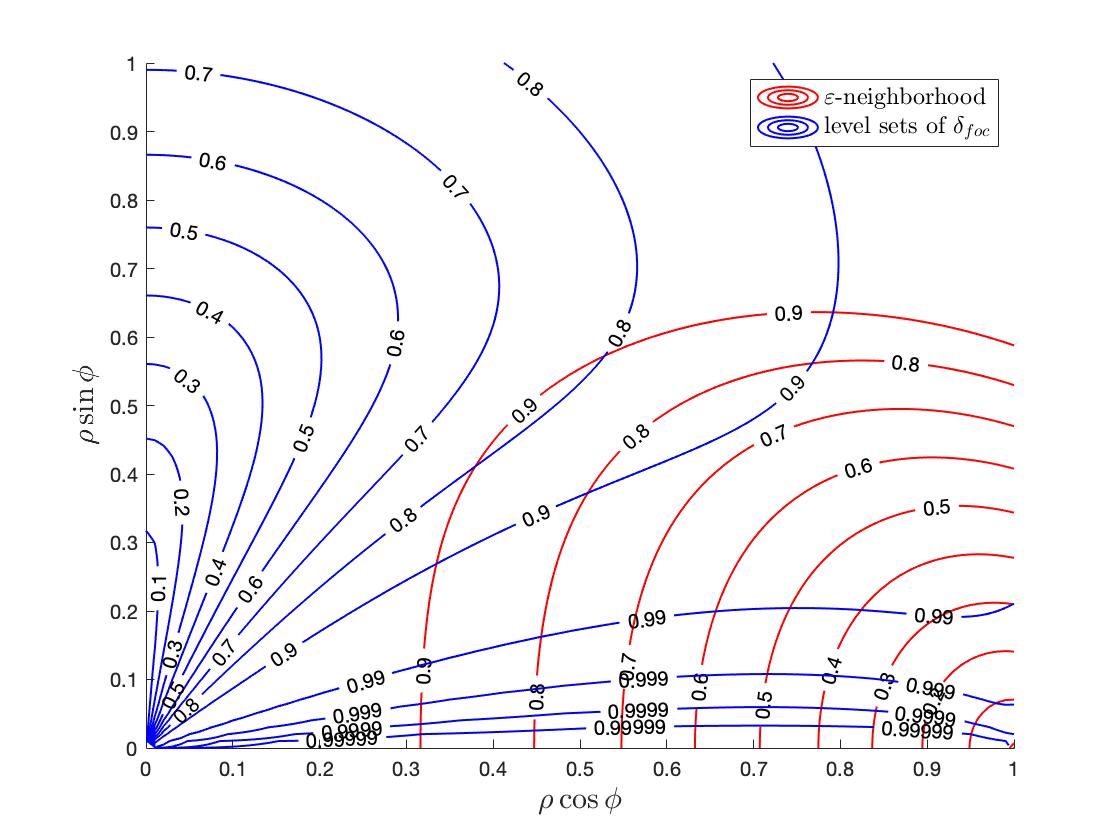}

}\subfloat[]{ \includegraphics[width=65mm,height=60mm]{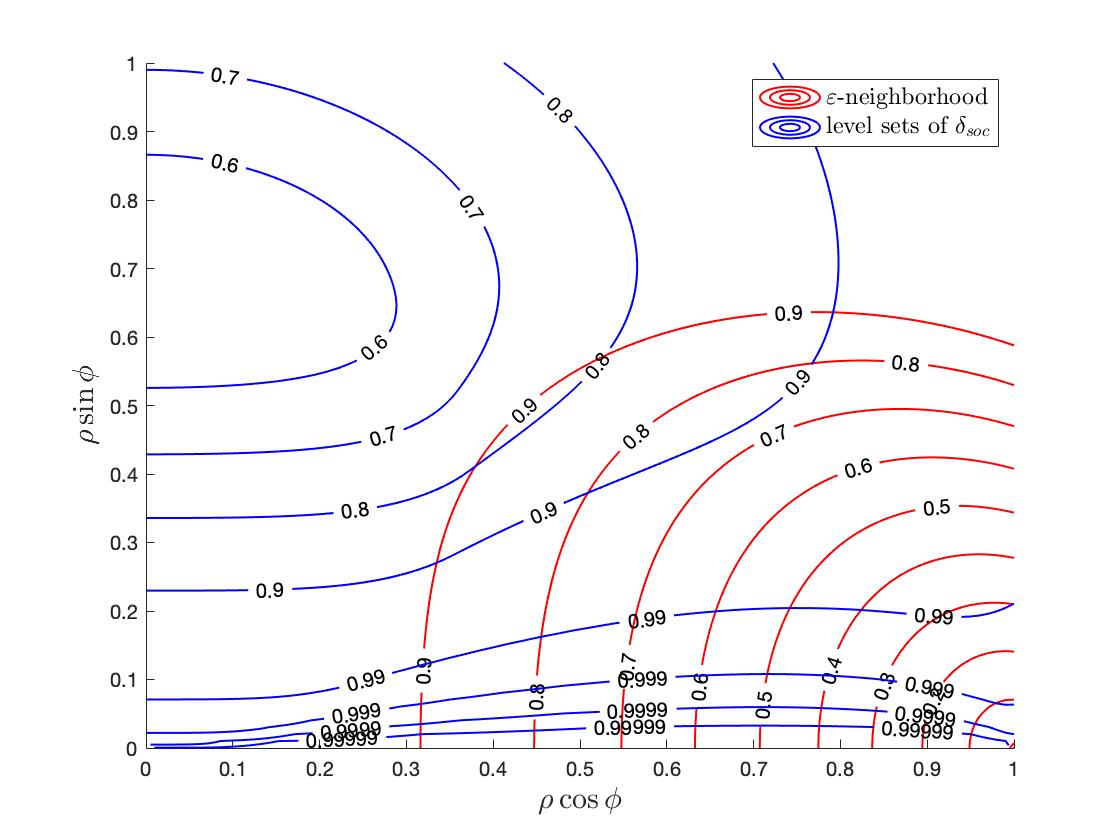}

}\hspace{0mm}

\caption{(a) the level sets of $\delta_{\protect\foc}$ and ${\|xx^{T}-zz^{T}\|_{F}}/{\|zz^{T}\|_{F}}$
(b) the level sets of $\delta_{\protect\soc}$ and ${\|xx^{T}-zz^{T}\|_{F}}/{\|zz^{T}\|_{F}}$}
\label{levels} 
\end{figure}

\section{Conclusions}

Recent work by Bhojanapalli et al. \cite{bhojanapalli2016global}
has shown that the non-convex optimization landscape of matrix sensing
contains no spurious local minima when there are sufficiently large
amount of samples. However, these theoretical bounds on the sample
complexity are very conservative compared to the number of samples
needed in real applications like power state estimation. In our paper,
we provide one explanation for this phenomenon: in real life, we often
have access to good initial points, which can reduce the number of
samples we need. The main results of our paper give a mathematical
characterization of this phenomenon. We define a function $\delta_{\soc}(X)$
that gives a \textit{precise} threshold on the number of samples needed
to prevent $X$ from becoming a spurious local minima. Although $\delta_{\soc}$
is difficult to compute exactly, we obtain a closed-form, sharp lower
bound using convex optimization. As a result, we are able to characterize
the \textit{tradeoff} between the quality of the initial point and
the sample complexity. In particular, we show that  a linear improvement in the quality of the initial
point corresponds to a linear decrease in sample complexity. 

On a more general level, our work uses new techniques to paint a full
picture for the non-convex landscape of matrix sensing: the problem
becomes more ``non-convex'' (requiring more samples to eliminate spurious
local minima) as we get further and further away from the global min.
Once we are sufficiently far away, it becomes necessary to rely on
global guarantees instead. Thus, our work brings new insight into
how a non-convex problem can gradually become more tractable either
through more samples or a better initial point and provides a tradeoff
between these two mechanisms. For future work, it would be interesting
to see if similar techniques can be extended to other non-convex models
such as neural networks.


\section*{Acknowledgements}
Partial financial support was provided by the National Science Foundation under award ECCS-1808859.

\section*{Broader Impact}

Many modern applications in engineering and computer science, and
in machine learning in particular often have to deal with non-convex
optimization. However, many aspects of non-convex optimization are
still not well understood. Our paper provides more insight into the
optimization landscape of a particular problem: low-rank matrix factorization.
In addition, the methods we develop can potentially be used to understand
many other non-convex problems. This is a step towards a more thorough
analysis of current algorithms for non-convex optimization and also
a step towards developing better and more efficient algorithms with
theoretical guarantees.

 \bibliographystyle{plain}
\bibliography{yourbibfile}

\newpage{}

\appendix

\section*{Appendix A.1}
In Appendix A we fill out the missing details in the proof sketch
of Section 5.2 and provide a complete proof of Theorem 4, which we restate below. %
\begin{thm} (Same as theorem 4). 
\label{main_cor} Fix $Z\in\R^{n\times r}$. Given $\AA$ satisfying
$\delta$-RIP and $X\in\R^{n\times r}$ such that $XX^{T}\ne ZZ^{T}$,
we have $\delta_{\foc}(X)=\cos\theta$, where 
\begin{equation}
\sin\theta=\|Z^{T}(I-XX^{\dag})Z\|_{F}\;\big/\;\|XX^{T}-ZZ^{T}\|_{F}.\label{sin}
\end{equation}
and $X^{\dag}$ denotes the pseudo-inverse of $X$. It follows that if
$\delta<\cos\theta$, then $X$ is not a spurious critical point of
$f_{\AA}(X)$. If $\delta\ge\cos\theta$, then there exists some $\AA^{\star}$
satisfying $\cos\theta$-RIP such that $\nabla f_{\AA}(X)=0$. 
\end{thm}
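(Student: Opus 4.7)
The plan is to dualize the definition of $\delta_{\foc}(X)$ into a convex SDP whose optimum can be computed in closed form via eigenvalue calculus on a rank-$2$ matrix, and then exhibit an adversarial operator $\AA^\star$ that attains the bound. I would first replace the outer minimization over $\delta$-RIP operators by an equivalent SDP in a positive-semidefinite weight $\H$, corresponding to $\A^T\A$ after normalization: the first-order condition $\nabla f_\AA(X)=0$ becomes the linear constraint $\X^T \H \e = 0$, and the RIP constant $\delta$ converts to $(1-\eta)/(1+\eta)$, where $\eta$ is the smallest eigenvalue of $\H$ after normalizing its largest eigenvalue to one. This repackaging of Theorem 8 in Zhang et al.~\cite{zhang2019sharp} yields the convex program $\eta(X) = \max\{\eta : \X^T \H \e = 0,\ \eta I \preceq \H \preceq I\}$. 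Taking its Lagrangian dual with a scalar multiplier $y$ for the equality and positive semidefinite multipliers $U_1, U_2$ for the matrix inequalities gives the SDP displayed in the sketch; Slater's condition (satisfied by $y=0$, $U_1=U_2=2I/(n(n+1))$) secures strong duality.

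\textbf{Rank-$2$ reduction and eigenvalue calculation.} The key structural observation is that the equality constraint $U_1 - U_2 = M(y)$ with $M(y) = (\X y)\e^T + \e (\X y)^T$ involves a symmetric matrix of rank at most two. For fixed $y$, the minimizing $(U_1, U_2)$ subject to $\tr(U_1) = 1$ are rescaled copies of the positive and negative parts of $M(y)$, so the objective collapses to $\tr[M(y)]_- / \tr[M(y)]_+$. A short explicit calculation identifies the two nonzero eigenvalues of $M(y)$ as $\|\X y\|\,\|\e\|\,(\cos\theta_y \pm 1)$ with $\cos\theta_y = \e^T \X y / (\|\e\|\,\|\X y\|)$; substitution reduces the whole problem to the scalar maximization $\delta_{\foc}(X) = \max_y \cos\theta_y$.

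\textbf{Closed form via least squares and the pseudo-inverse.} Because $\cos\theta_y$ is scale-invariant, the optimizer is the least-squares projection $y^\star = \arg\min_y \|\e - \X y\|$, yielding $\sin^2\theta^\star = \|\e - \X y^\star\|^2 / \|\e\|^2$. The substantive technical step is evaluating this residual by unpacking $\X$ as the linear map $Y \mapsto XY^T + YX^T$ and using the orthogonal projector $XX^\dagger$ onto the column space of $X$. Decomposing $Z = XX^\dagger Z + (I - XX^\dagger)Z$ isolates the part of $\e = XX^T - ZZ^T$ that cannot be represented as $XY^T + YX^T$ for any $Y$: this irreducible piece reduces to $(I - XX^\dagger)ZZ^T(I - XX^\dagger)$, whose Frobenius norm squared equals $\|Z^T(I - XX^\dagger)Z\|_F^2$. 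This pseudo-inverse identity, packaged as a technical lemma in the appendix, delivers the claimed formula for $\sin\theta$.

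\textbf{Tightness via adversarial construction.} The main obstacle, and the half of the theorem least visible from the sketch, is the tightness claim. Given strong duality and the dual optimizer $y^\star$, I would recover an adversarial $\AA^\star$ directly from the primal optimum $\H^\star$ by factoring $\H^\star = \A^{\star T}\A^\star$ and reading off $\AA^\star$ through the vec-identity $\vector[\AA(X)] = \A\,\vector(X)$. The condition $\eta^\star I \preceq \H^\star \preceq I$ translates into $\cos\theta$-RIP on rank-$2r$ matrices, while $\X^T \H^\star \e = 0$ is precisely the first-order stationarity $\nabla f_{\AA^\star}(X) = 0$. The subtlety to verify is that $\H^\star$ only needs to act correctly on the subspace spanned by vectorized rank-$2r$ symmetric matrices; extending it to a full linear operator without inflating the RIP constant is the main bookkeeping task, and checking that complementary slackness forces $M(y^\star)$ to lie in the range compatible with $\H^\star$ is what ultimately closes the equality $\delta_{\foc}(X) = \cos\theta$.
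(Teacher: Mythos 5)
Your proposal follows the paper's proof essentially step for step: the SDP reformulation via Zhang et al.'s Theorem~8, the Lagrangian dual with Slater's condition secured at $y=0$, $U_1=U_2=2I/(n(n+1))$, the collapse to the eigenvalues of the rank-$2$ matrix $M(y)=(\X y)\e^T+\e(\X y)^T$, the identification $\delta_{\foc}(X)=\max_y\cos\theta_y$, and the least-squares/pseudo-inverse evaluation yielding $\sin\theta=\|Z^T(I-XX^\dag)Z\|_F/\|XX^T-ZZ^T\|_F$ are all the same ingredients the paper uses (Lemmas~8, 9, 10, 11 in the appendix). Two small remarks. First, the dual multiplier $y$ is a vector in $\R^{nr}$, not a scalar as you write; the paper's optimization over the direction $\hat y$ and the magnitude $\alpha$ makes that explicit. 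Second, your reduction of the inner SDP to $\tr[M(y)]_-/\tr[M(y)]_+$ skips a case: the paper's Lemma~\ref{eigsplit} gives $\min\{\tr(M_-)/\tr(M_+),\,\tr(M_+)/\tr(M_-)\}$, and it takes the symmetry $y\mapsto -y$ (which swaps $M_+$ and $M_-$) to discard the second branch when subsequently minimizing over $y$; you should note that symmetry explicitly rather than asserting the collapse. On the tightness direction, your worry about extending $\H^\star$ to a bona fide $\delta$-RIP operator is legitimate, but that bookkeeping lives entirely inside Zhang et al.'s Theorem~8, which the paper invokes once and does not re-derive; as long as you are comfortable citing that equivalence (as the paper does), the adversarial $\AA^\star$ comes for free from the attained primal optimum, and no separate complementary-slackness argument is needed here.
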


Before we prove the theorem above, we first prove two technical lemmas. The first lemma gives an explicit solution to the eigenvalues of a rank-2 matrix and the second lemma characterizes the solution to an SDP that will be a part of the proof of theorem 4. 

\begin{lem}
Given $a, b \in \mathbb{R}^{n},$ the matrix $M=a b^{T}+b a^{T}$ has eigenvalues $\lambda_{1} \geq \cdots \geq \lambda_{n}$ where:
$$
\lambda_{i}=\left\{\begin{array}{ll}
+\|a\|\|b\|(1+\cos \theta) & i=1 \\
\vspace{-1mm}\\
-\|a\|\|b\|(1-\cos \theta) & i=n \\
\vspace{-1mm}\\
0 & \mathrm{ otherwise }
\end{array}\right.
$$
and $\theta \equiv \arccos \left(\frac{a^{T} b}{\|a\|\|b\|}\right)$ is the angle between $a$ and $b$.
\label{2eig}
\end{lem}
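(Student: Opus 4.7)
My plan is to exploit two structural features of $M = ab^{T} + ba^{T}$: it is symmetric, hence orthogonally diagonalizable with real eigenvalues, and its range is contained in $\mathrm{span}\{a,b\}$, a subspace of dimension at most two. These observations alone force $M$ to have at most two non-zero eigenvalues, so the remaining $n - 2$ eigenvalues must vanish. This already accounts for the ``otherwise'' case in the statement, reducing the problem to locating the two potentially non-zero eigenvalues.

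To pin those two eigenvalues down, I would restrict attention to eigenvectors that lie in $\mathrm{span}\{a,b\}$, since any eigenvector associated with a non-zero eigenvalue must lie in $\mathrm{range}(M)$. Writing a candidate as $v = \alpha a + \beta b$, expanding $Mv$ directly, and using $a^{T}b = \|a\|\|b\|\cos\theta$, matching the coefficients of $a$ and $b$ on both sides of $Mv = \lambda v$ (valid when $a$ and $b$ are linearly independent) reduces the problem to the $2 \times 2$ eigenvalue system
\[
\begin{pmatrix} a^{T}b & \|b\|^{2} \\ \|a\|^{2} & a^{T}b \end{pmatrix} \begin{pmatrix} \alpha \\ \beta \end{pmatrix} = \lambda \begin{pmatrix} \alpha \\ \beta \end{pmatrix}.
\]
The characteristic equation $(a^{T}b - \lambda)^{2} = \|a\|^{2}\|b\|^{2}$ yields $\lambda = a^{T}b \pm \|a\|\|b\| = \|a\|\|b\|(\cos\theta \pm 1)$, which is exactly the claimed non-zero eigenvalues, with the $+$ root giving $\lambda_{1} \geq 0$ and the $-$ root giving $\lambda_{n} \leq 0$.

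The only place I would need to be careful, though it is not really difficult, is the degenerate situation in which $\{a,b\}$ is linearly dependent, since then the ``match coefficients'' step is invalid. I would dispatch these cases by direct inspection: if $a = 0$ or $b = 0$ then $M = 0$ and the formula $\|a\|\|b\|(1\pm\cos\theta)$ is read as zero; if $b = \gamma a$ with $\gamma \ne 0$, then $M = 2\gamma aa^{T}$ has eigenvalues $2\gamma\|a\|^{2}$ and zero, which one checks agrees with the formula once $\cos\theta = \mathrm{sign}(\gamma)$ is substituted. The main substantive step therefore remains the $2 \times 2$ reduction above, and everything else is bookkeeping.
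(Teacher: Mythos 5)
Your proof is correct, and it takes a route that differs modestly from the paper's. The paper normalizes $a$ and $b$ to unit norm, writes $b = a\cos\theta + c\sin\theta$ with $c$ a unit vector orthogonal to $a$, and conjugates $M$ by the orthonormal frame $[a\ c]$ to obtain the symmetric $2\times 2$ block $\left(\begin{smallmatrix}2\cos\theta & \sin\theta\\ \sin\theta & 0\end{smallmatrix}\right)$, whose eigenvalues are $\cos\theta\pm1$. You instead note that any eigenvector of $M$ with a nonzero eigenvalue must lie in $\mathrm{span}\{a,b\}$, and you work directly in the (generally non-orthonormal) basis $\{a,b\}$, obtaining a non-symmetric $2\times 2$ matrix with characteristic polynomial $(a^{T}b-\lambda)^{2}-\|a\|^{2}\|b\|^{2}$, which again yields $\lambda=\|a\|\|b\|(\cos\theta\pm1)$. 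Both arguments hinge on $\mathrm{rank}(M)\le 2$ to dispose of the $n-2$ zero eigenvalues and then solve a $2\times 2$ problem; the paper's orthonormal change of basis keeps the reduced matrix symmetric but implicitly assumes $a,b$ nonzero and non-parallel (otherwise $c$ and $\theta$ are ill-defined), while your basis $\{a,b\}$ avoids constructing $c$ at the cost of losing symmetry and of needing linear independence for the coefficient-matching step. Your explicit handling of the degenerate cases ($a=0$, $b=0$, or $b=\gamma a$) is a small but genuine improvement that the paper's proof sketch leaves implicit.
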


\begin{lem}
Given a matrix $M\neq 0$ we can split the matrix $M$
into a positive and negative part satisfying 
\[
M=M_{+}-M_{-}\quad where\quad M_{+},M_{-}\succeq0,\quad M_{+}M_{-}=0.
\]
Then the following problem has solution 
\[
\min_{{\alpha\in\mathbb{R}\atop U,V\succeq0}}\{\operatorname{tr}(V):\operatorname{tr}(U)=1,\alpha M=U-V\} = \min \left\{ \frac{\operatorname{tr}\left(M_{-}\right)}{\operatorname{tr}\left(M_{+}\right)}, \frac{\operatorname{tr}\left(M_{+}\right)}{\operatorname{tr}\left(M_{-}\right)} \right\}.
\] \label{eigsplit}
\end{lem}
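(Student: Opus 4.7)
The plan is to reduce this SDP to a one-dimensional linear programming problem in $\alpha$. The first key observation is that the two constraints $\alpha M = U - V$ and $\tr(U) = 1$ already fix $\tr(V)$ once $\alpha$ is chosen: taking the trace of the first constraint gives
\[
\tr(V) \;=\; \tr(U) - \alpha\,\tr(M) \;=\; 1 - \alpha\bigl(\tr(M_{+}) - \tr(M_{-})\bigr).
\]
Hence $\tr(V)$ is an affine function of $\alpha$, and the remaining task is to pin down the range of $\alpha$ for which a feasible pair $(U,V)$ with $U,V \succeq 0$ actually exists.

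Second, I would use the hypothesis $M_{+}M_{-} = 0$ to bracket the feasible $\alpha$. Let $P_{+}$ and $P_{-}$ be the orthogonal projections onto the ranges of $M_{+}$ and $M_{-}$; by orthogonality, $\tr(P_{+} M) = \tr(M_{+})$ and $\tr(P_{-} M) = -\tr(M_{-})$. Testing the constraint $U = \alpha M + V$ against $P_{+}$, and combining $\tr(P_{+} U) \le \tr(U) = 1$ with $\tr(P_{+} V) \ge 0$, yields $\alpha\,\tr(M_{+}) \le 1$, i.e.\ $\alpha \le 1/\tr(M_{+})$. Symmetrically, testing against $P_{-}$ yields $\alpha \ge -1/\tr(M_{-})$.

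Third, once $\alpha$ is bracketed by this interval, minimizing the affine objective $1 - \alpha\,\tr(M)$ is routine. Its slope is $-(\tr(M_{+}) - \tr(M_{-}))$, so if $\tr(M_{+}) > \tr(M_{-})$ the minimum is attained at the right endpoint $\alpha = 1/\tr(M_{+})$ and evaluates to $\tr(M_{-})/\tr(M_{+})$, while if $\tr(M_{+}) < \tr(M_{-})$ it is attained at the left endpoint $\alpha = -1/\tr(M_{-})$ and evaluates to $\tr(M_{+})/\tr(M_{-})$; the two cases agree when $\tr(M_{+}) = \tr(M_{-})$. In every case the optimum is the smaller of the two ratios, which is exactly the claimed value.

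Finally, I would certify these lower bounds as genuine optima by exhibiting explicit minimizers. At $\alpha = 1/\tr(M_{+})$, choose $U = M_{+}/\tr(M_{+})$ and $V = M_{-}/\tr(M_{+})$; at $\alpha = -1/\tr(M_{-})$, choose $U = M_{-}/\tr(M_{-})$ and $V = M_{+}/\tr(M_{-})$. Each pair is PSD, satisfies $\alpha M = U - V$ and $\tr(U) = 1$, and attains its respective objective value. The only point requiring any care is the degenerate case where one of $\tr(M_{+})$ or $\tr(M_{-})$ vanishes; since $M \ne 0$, at least one is strictly positive, and on the degenerate side the bracket simply opens up to infinity, forcing the minimum to $0$, consistent with the formula under the convention $a/\infty = 0$. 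I do not expect any real obstacle here; the main care is in organizing the $\alpha$-bracket cleanly via the projections $P_{+}$ and $P_{-}$.
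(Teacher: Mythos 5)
Your proof is correct, and it takes a genuinely different route from the paper. The paper derives the answer through Lagrangian duality: it writes $p^\star = \max_\beta \min_{\alpha,U,V}\{\tr(V) + \beta(1-\tr(U)) : \alpha M = U - V\}$, eliminates $V$, pins down the inner minimization over $U$ at $U = \alpha M_+$ or $U = -\alpha M_-$ depending on the sign of $\alpha$, and then resolves a minimax in $(\alpha,\beta)$ by a case split. Your argument stays entirely on the primal side: you observe that the trace of the linear constraint forces $\tr(V) = 1 - \alpha\,\tr(M)$, so the SDP collapses to a one-dimensional problem in $\alpha$; you then bracket the feasible $\alpha$ by pairing the constraint with the orthogonal projections $P_{\pm}$ onto $\range(M_{\pm})$ (valid precisely because $M_+M_-=0$ makes these ranges orthogonal, so $\tr(P_+M)=\tr(M_+)$ and $\tr(P_-M)=-\tr(M_-)$); and you certify tightness with the explicit feasible pairs $(U,V) = (M_+,M_-)/\tr(M_+)$ and $(M_-,M_+)/\tr(M_-)$ at the two endpoints. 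This is shorter, avoids invoking strong duality, and makes the role of the hypothesis $M_+M_-=0$ transparent---it is exactly what makes the projection test give the sharp bracket. The duality route in the paper is the one that naturally emerges when you start from the SDP reformulation of $\delta_{\foc}$, but as a standalone proof of this lemma your direct argument is cleaner and arguably more illuminating. Your handling of the degenerate case (one of $\tr(M_\pm)$ vanishing) is also correct.
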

\begin{proof}
	(Lemma \ref{2eig}). Without loss of generality, assume that $\|a\|=\|b\|=1 .$ (Otherwise, we can rescale $\hat{a}=$ $a /\|a\|, \hat{b}=b /\|b\|$ and write $M=\|a\|\|b\| (\hat{a} \hat{b}^{T}+\hat{b} \hat{a}^{T}) $. Now decompose $b$ into a tangent and
normal component with respect to $a,$ as in
$$
b=a \underbrace{a^{T} b}_{\cos \theta}+\underbrace{\left(I-a a^{T}\right) b}_{c \sin \theta}=a \cos \theta+c \sin \theta
$$
where $c$ is a unit normal vector with $\|c\|=1$ and $a^{T} c=0 .$ Thus $ab^T+ba^T$ can be written as 
$$
a b^{T}+b a^{T}=\left[\begin{array}{ll}
a & c
\end{array}\right]\left[\begin{array}{cc}
2 \cos \theta & \sin \theta \\
\sin \theta & 0
\end{array}\right]\left[\begin{array}{ll}
a & c
\end{array}\right]^{T}.
$$
This shows that $M$ is spectrally similar to a $2 \times 2$ matrix with eigenvalues $\cos \theta \pm 1$.
\end{proof}

\begin{proof}
(Lemma \ref{eigsplit}). 
In this proof we will consider two cases: $\tr(M_-) \leq \tr(M_+)$ and $\tr(M_-) \geq \tr(M_+)$. We'll see that in the first case, the optimal value is $\tr(M_-)/\tr(M_+)$ and in the second case, the optimal value is $\tr(M_+)/\tr(M_1)$.

First, assume that $\tr(M_-) \leq \tr(M_+)$. Let $p^{*}$ be the optimal value. Then we have 
\begin{align}
p^{\star} & =\max_{\beta}\min_{{\alpha\in\mathbb{R}\atop U,V\succeq0}}\{\operatorname{tr}(V)+\beta\cdot[1-\operatorname{tr}(U)]:\alpha M=U-V\}\\
 & =\max_{\beta}\min_{\alpha\in\mathbb{R}}\left\{ \beta+\min_{U,V\succeq0}\{\operatorname{tr}(V)-\beta\cdot\operatorname{tr}(U):\alpha M=U-V\}\right\} \nonumber\\
 & =\max_{\beta}\min_{\alpha\in\mathbb{R}}\left\{ \beta+ \min_{U}\left[\operatorname{tr}\left(U-\alpha M\right)-\beta\cdot\operatorname{tr}\left(U\right)\right]:U-\alpha M \succeq 0, U\succeq 0\right\} \nonumber \\
 & =\max_{\beta}\min_{\alpha\in\mathbb{R}}\left\{ \beta + \min_{U} [-\alpha \tr(M)+(1-\beta)\tr(U)]:U-\alpha M \succeq 0, U\succeq 0\right\} \nonumber \\
  & =\max_{\beta \leq 1}\min_{\alpha\in\mathbb{R}}\left\{ \beta + \min_{U} [-\alpha \tr(M)+(1-\beta)\tr(U)]:U-\alpha M \succeq 0, U\succeq 0\right\}. 
\end{align}
Note that the first line converts the equality constraint into a Lagrangian. The second line simply rearranges the terms. The third line plugs in $V=U-\alpha M$. The fourth line again rearranges the terms. The last line follows from the observation that if $\beta>1$, then the inner minimization over $U$ will go to negative infinity since the trace of $U$ can be arbitrarily large. 

First, consider the case $\alpha\geq 0$. Then we have $\alpha M = \alpha M_{+}-\alpha M_{-}$. Since $1-\beta \geq 0$, the minimization over $U$ is achieved at $U=\alpha M_{+}$. Plugging this value into the optimization problem, then (19) becomes 
\[
\max_{\beta \leq 1} \min_{\alpha \geq 0} \{\beta +\alpha [\tr(M_{-})-\beta\tr(M_{+})]  \}
\]
If $\tr(M_{-})-\beta\tr(M_{+}) < 0$, then the optimal value of the inner minimization will go to negative infinity. On the other hand, if $\tr(M_{-})-\beta\tr(M_{+}) \geq 0$ then the minimum inside is achieved at $\alpha=0$. Thus the problem above is equivalent to
\[
\max_{\beta \leq 1} \{\beta:  \tr(M_{-})-\beta\tr(M_{+}) \geq 0\}.
\]
Since $\tr(M_{-}) \leq \tr(M_{+})$, the optimal value of the problem above is achieved at $\tr(M_{-})/\tr(M_{+}) \leq 1$.  
Now suppose that $\alpha \leq 0$. Then the optimal value for $U$ is achieved at $U=-\alpha M_{-}$. Plugging this value in and (19) becomes
\[
\max_{\beta \leq 1} \min_{\alpha \leq 0} \{\beta +\alpha [\beta \tr(M_{-})-\tr(M_{+})]  \}.
\]
Similar to before, we must have $\beta \tr(M_{-})-\tr(M_{+}) \leq 0$, so $\beta \leq \tr(M_{+})/\tr(M_{-})$. Since $\tr(M_{-}) \leq \tr(M_{+})$, the optimal value in this case is just $\beta = 1$. Combining the results for $\alpha \geq 0$ and $\alpha \leq 0$, we find that when $\tr(M_{-}) \leq \tr(M_{+})$, the optimal value is 
\[
p^* = \min \left\{1, \frac{\tr(M_{-})}{\tr(M_{+})} \right\} = \frac{\tr(M_{-})}{\tr(M_{+})}.
\]
Repeating the same arguments for when $\tr(M_{-}) \geq \tr(M_{+})$,  we see that in this case the optimal value becomes 
\[
p^*  = \min\left\{\frac{\tr(M_{+})}{\tr(M_{-})}, 1 \right\} = \frac{\tr(M_{+})}{\tr(M_{-})}.
\]
Finally, combining these two cases, i.e., $\tr(M_{-}) \geq \tr(M_{+})$ and $\tr(M_{-}) \leq \tr(M_{+})$, we obtain 
\[
p^* = \min \left\{ \frac{\operatorname{tr}\left(M_{-}\right)}{\operatorname{tr}\left(M_{+}\right)}, \frac{\operatorname{tr}\left(M_{+}\right)}{\operatorname{tr}\left(M_{-}\right)} \right\},
\]
which completes the proof.

\end{proof}

\section*{Appendix A.2}
Now we are ready to prove Theorem 4. Recall that the first order threshold function is defined as the solution to the following optimization problem:
\[
\delta_{\foc}(X)\equiv  \min_{\mathcal{A}}\{\delta:\nabla f_{\AA}(X)=0,\quad\AA\text{ satisfies }\delta\text{-RIP}\}
\]
Using Theorem 8 from \cite{zhang2019sharp}, the optimization problem above can be formulated as \begin{equation}
\eta(X)\quad\equiv\quad\max_{\eta,\H}\left\{ \eta\quad:\quad\X^{T}\H\e=0,\quad\eta I\preceq\H\preceq I\right\} .\label{eq:prim_1}
\end{equation}
where $\eta = (1-\delta_\foc)/(1+\delta_\foc)$. Our goal is to solve this optimization problem 
in closed form. 
In Section 5.2, we wrote the dual of problem (\ref{eq:prim_1}) as
\begin{align}
\underset{y,U_{1},U_{2}}{\mathrm{min}}\quad & \tr(U_{2})\label{eq:dual_1}\\
\mathrm{subject~to}\quad & (\X y)\e^{T}+\e(\X y)^{T}=U_{1}-U_{2}\nonumber \\
 & \tr(U_{1})=1,\quad U_{1},U_{2}\succeq0.\nonumber 
\end{align}
and stated that this dual problem can be
rewritten as an optimization problem over the eigenvalues of a rank-2
matrix. This is given in the lemma below. To simplify notation,
here we define a positive/negative splitting: for any $\alpha\in\mathbb{R}_{+}$
we denote $[\alpha]_{+}=\max\{0,+\alpha\}$ and $[\alpha]_{-}=\max\{0,-\alpha\}$.
This idea can be extended to matrices by applying splitting to the
eigenvalues. 
\begin{lem}
Given data $\e$ and $\X\neq 0$, define 
\begin{align}
\eta=\underset{y,U_{1},U_{2}}{\mathrm{min}}\quad & \tr(U_{2})\label{eq:dual_1}\\
\mathrm{subject~to}\quad & (\X y)\e^{T}+\e(\X y)^{T}=U_{1}-U_{2}\nonumber \\
 & \tr(U_{1})=1,\quad U_{1},U_{2}\succeq0.\nonumber 
\end{align}
Define $M(y)$ to be the rank-2 matrix $(\X y)\e^{T}+\e(\X y)^{T}$ and let $\lambda_{i}[M(y)]$ denote its eigenvalues.
Then $\eta$ can be evaluated as
\[
\eta=\min_{y\neq 0}\frac{\tr{[M(y)]_{-}}}{\tr{[M(y)]_{+}}}=\min_{y\neq 0}\frac{\sum_{i}{\lambda_{i}[M(y)]_{-}}}{\sum_{i}{\lambda_{i}[M(y)]_{+}}}  = \min_{y\neq 0}\frac{1-\cos\theta_{y}}{1+\cos\theta_{y}},
\]
where 
$\cos\theta_{y}= {\e^{T}\mathbf{X}y}/{\|\e\|\|\mathbf{X}y\|}.$
 \label{dual}
\end{lem}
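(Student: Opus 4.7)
The plan is to combine Lemma \ref{2eig} (the eigenvalue structure of $ab^{T}+ba^{T}$) with Lemma \ref{eigsplit} (the closed-form SDP whose value is the positive/negative trace ratio) to reduce the dual \eqref{eq:dual_1} to an explicit scalar minimization over $y$.

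First I would split the outer optimization over $y$ into a magnitude and a direction. Writing $y=\alpha\hat{y}$ with $\hat{y}$ of unit length and $\alpha\in\mathbb{R}$, linearity of $M(\cdot)$ gives $M(y)=\alpha M(\hat{y})$, so for each fixed $\hat{y}$ the inner minimization becomes
\[
\min_{\alpha\in\mathbb{R},\; U_{1},U_{2}\succeq 0}\{\tr(U_{2}):\alpha M(\hat{y})=U_{1}-U_{2},\; \tr(U_{1})=1\},
\]
which is exactly the SDP solved by Lemma \ref{eigsplit} with $M=M(\hat{y})$. Its optimal value is $\min\{\tr(M(\hat{y})_{-})/\tr(M(\hat{y})_{+}),\;\tr(M(\hat{y})_{+})/\tr(M(\hat{y})_{-})\}$.

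Next I would insert the eigenvalue formula. Since $M(\hat{y})=(\X\hat{y})\e^{T}+\e(\X\hat{y})^{T}$ has the form covered by Lemma \ref{2eig} with $a=\X\hat{y}$ and $b=\e$, its only nonzero eigenvalues are $\|\X\hat{y}\|\|\e\|(1+\cos\theta_{\hat{y}})$ and $-\|\X\hat{y}\|\|\e\|(1-\cos\theta_{\hat{y}})$, so $\tr(M(\hat{y})_{+})=\|\X\hat{y}\|\|\e\|(1+\cos\theta_{\hat{y}})$ and $\tr(M(\hat{y})_{-})=\|\X\hat{y}\|\|\e\|(1-\cos\theta_{\hat{y}})$. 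The common prefactor $\|\X\hat{y}\|\|\e\|$ cancels, so the inner minimum reduces to $\min\{(1-\cos\theta_{\hat{y}})/(1+\cos\theta_{\hat{y}}),\;(1+\cos\theta_{\hat{y}})/(1-\cos\theta_{\hat{y}})\}$. Taking the outer minimum over $\hat{y}$ and using the symmetry $M(-\hat{y})=-M(\hat{y})$, which swaps $M_{+}$ with $M_{-}$ and sends $\cos\theta_{\hat{y}}$ to $-\cos\theta_{\hat{y}}$, the two branches contribute the same infimum and collapse to the single expression $\eta=\min_{y\neq 0}(1-\cos\theta_{y})/(1+\cos\theta_{y})$, matching the claim.

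The main obstacle I anticipate is the bookkeeping for degenerate $y$: the angle $\cos\theta_{y}$ is undefined when $\X y=0$, and the identities $\tr(M(y)_{\pm})=\|\X y\|\|\e\|(1\pm\cos\theta_{y})$ are vacuous there. Under the standing hypotheses $\X\neq 0$ and $\e\neq 0$ (the latter following from $XX^{T}\neq ZZ^{T}$), the set $\{y:\X y\neq 0\}$ is open and nonempty, and restricting the infimum to this set changes nothing. Everything else is algebraic rearrangement, with the conceptual content entirely in Lemmas \ref{2eig} and \ref{eigsplit}.
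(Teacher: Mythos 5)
Your proposal follows the paper's proof essentially step for step: decompose $y=\alpha\hat y$, apply Lemma~\ref{eigsplit} to the inner SDP over $(\alpha,U_1,U_2)$ with $M=M(\hat y)$, plug in the rank-2 eigenvalues from Lemma~\ref{2eig}, cancel the $\|\X\hat y\|\|\e\|$ prefactor, and use the $y\mapsto -y$ symmetry to collapse $\min\{\eta,1/\eta\}$ to the single branch. Your extra remark about the degenerate set $\{y:\X y=0\}$ (where the objective evaluates to $1$ and hence never beats the generic branch) is a small but legitimate tightening of a point the paper leaves implicit.
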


The proof of Lemma \ref{dual} relies mainly on the two lemmas we proved in the preceding section. 
\begin{proof}
(Lemma \ref{dual}).  Let $y = \alpha\hat{y}$, where $\|\hat{y}\|=1$ and $\alpha \in \mathbb{R}^n$. Thus the optimization problem (\ref{eq:dual_1}) becomes 
\begin{align}
\eta=\underset{\alpha,\hat{y}, U_{1},U_{2}}{\mathrm{min}}\quad & \tr(U_{2})\nonumber\\
\mathrm{subject~to}\quad & \alpha\cdot [(\X \hat{y})\e^{T}+\e(\X \hat{y})^{T}]=U_{1}-U_{2}\nonumber \\
 & \tr(U_{1})=1, \quad \|\hat{y}\|=1, \quad U_{1},U_{2}\succeq0.\nonumber 
\end{align}
To solve this problem, first we keep $\hat{y}$ fixed, and optimize over $\alpha, U_1,U_2$. This gives us the problem
\begin{align}
\underset{\alpha, U_{1},U_{2}}{\mathrm{min}}\quad & \tr(U_{2})\nonumber\\
\mathrm{subject~to}\quad & \alpha\cdot [(\X \hat{y})\e^{T}+\e(\X \hat{y})^{T}]=U_{1}-U_{2}\nonumber \\
 & \tr(U_{1})=1, \quad U_{1},U_{2}\succeq0.\nonumber 
\end{align}
Notice that if we set $M(\hat{y}) = (\X \hat{y})\e^{T}+\e(\X \hat{y})^{T}$, then the problem above is in  exactly the same form as the one in lemma \ref{eigsplit}. Therefore, its optimal value is 
\[
 \min \left\{ \frac{\operatorname{tr}\left(M(\hat{y})_{-}\right)}{\operatorname{tr}\left(M(\hat{y})_{+}\right)}, \frac{\operatorname{tr}\left(M(\hat{y})_{+}\right)}{\operatorname{tr}\left(M(\hat{y})_{-}\right)} \right\}.
\]
Finally, to obtain $\eta$, we still need to optimize over $\hat{y}$, i.e.,
\[
\eta = \min_{\|\hat{y}\|=1} \min \left\{ \frac{\operatorname{tr}\left(M(\hat{y})_{-}\right)}{\operatorname{tr}\left(M(\hat{y})_{+}\right)}, \frac{\operatorname{tr}\left(M(\hat{y})_{+}\right)}{\operatorname{tr}\left(M(\hat{y})_{-}\right)} \right\}.
\]
Since both the numerator and the denominator are linear in $y$, we can ignore the constraint $\|\hat{y}\|=1$ and simply optimize over $y$, which gives us
\[
\eta = \min_{y \neq 0} \min \left\{ \frac{\operatorname{tr}\left(M({y})_{-}\right)}{\operatorname{tr}\left(M({y})_{+}\right)}, \frac{\operatorname{tr}\left(M({y})_{+}\right)}{\operatorname{tr}\left(M({y})_{-}\right)} \right\}.
\]
With lemma \ref{2eig}, we see that the only two eigenvalues of $M(y)$ are
\[
\|\X y\|\|y\|(\cos\theta_y+1), \qquad \|\X y\|\|y\|(\cos\theta_y-1),
\]
where $\cos\theta_{y}= {\e^{T}\mathbf{X}y}/{\|\e\|\|\mathbf{X}y\|}.$
It follows that $\tr(M_{-}) = \|\X y\|\|y\|(1-\cos\theta_y)$ and $\tr(M_{+}) = \|\X y\|\|y\|(\cos\theta_y+1)$. Thus
\[
\eta = \min_{y\neq 0}  \min\left\{\frac{1-\cos\theta_y}{1+\cos\theta_y}, \frac{1+\cos\theta_y}{1-\cos\theta_y}\right\}.
\]
Notice that in the optimization problem above, if the minimum is achieved at some $y^*$, it must also be achieved at $-y^*$, due to symmetry. Therefore, it suffices to optimize over only the first term $\frac{1-\cos\theta_y}{1+\cos\theta_y}$, so we get  
 \[
\eta =\min_{y\neq 0} \frac{1-\cos\theta_y}{1+\cos\theta_y}.
\]
This completes the proof. 
\end{proof}

Notice that Lemma \ref{dual} reduces problem \ref{eq:prim_1} to only depend on the values of $\cos\theta_y$. Now, to complete the proof of Theorem 4, we just need one additional lemma that gives a closed form solution for $\cos\theta_y$, which we state below. 

\begin{lem}
\label{closecos}
Let $X,Z$ be $n\times r$ matrices of any rank, and define $\e$
and $\X \neq 0$ as in equations \ref{eX}(a) and \ref{eX}(b). Then, the incidence angle
$\theta$ between $\e$ and $\range(\X)$, defined as in 
\[
\cos\theta=\max_{y\neq 0}\left\{ \frac{\e^{T}\X y}{\|\e\|\|\X y\|}\right\} =\frac{\|\X\X^{\dag}\e\|}{\|\e\|},
\]
has closed-form expression 
\[
\sin\theta=\frac{\|Z^{T}(I-XX^{\dag})Z\|_{F}}{\|XX^{T}-ZZ^{T}\|_{F}}
\]
where $X^{\dag}$ denotes the Moore--Penrose pseudoinverse of $X$.
\label{costheta} \end{lem}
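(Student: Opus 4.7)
The plan is to unpack $\theta$ via the orthogonal projector $\X\X^\dag$ onto $\range(\X)$, reduce to computing $\|(I-\X\X^\dag)\e\|$, and then identify this quantity as the Frobenius norm of the compressed matrix $Z^T(I-XX^\dag)Z$.

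First I establish the first displayed equality. The Cauchy--Schwarz maximum $\cos\theta = \max_{y\neq 0} \e^T \X y / (\|\e\|\|\X y\|)$ is attained by aligning $\X y$ with the orthogonal projection of $\e$ onto $\range(\X)$, namely $\X \X^\dag \e$, giving $\cos\theta = \|\X\X^\dag \e\|/\|\e\|$ and hence $\sin^2\theta = \|(I-\X\X^\dag)\e\|^2/\|\e\|^2$.

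Next I characterize $\range(\X)$ inside the symmetric matrices. Set $P = XX^\dag$ and $P^\perp = I - P$. Since $P^\perp X = 0$, every matrix of the form $XY^T + YX^T$ satisfies $P^\perp(XY^T + YX^T)P^\perp = 0$. For the converse, using a compact SVD $X = U \Sigma V^T$ with $\Sigma$ invertible on its active block and completing $U$ to an orthogonal $(U, U_\perp)$, I will match the $(1,1)$ and $(1,2)$ blocks $U^T W U$ and $U^T W U_\perp$ of a symmetric $W$ with $U_\perp^T W U_\perp = 0$ by explicit choices of $Y_1 = U^T Y$ and $Y_2 = U_\perp^T Y$; rank deficiency of $X$ just leaves an unconstrained block in $Y$. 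This shows $\range(\X) = \{W : W=W^T,\; P^\perp W P^\perp = 0\}$, and its orthogonal complement inside the symmetric matrices is $\{P^\perp S P^\perp : S=S^T\}$, verified by the orthogonal decomposition $W = (W - P^\perp W P^\perp) + P^\perp W P^\perp$.

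With this characterization, the projection of $E = XX^T - ZZ^T$ onto $\range(\X)^\perp$ is just $P^\perp E P^\perp$. Because $P^\perp X = 0$, this equals $-P^\perp ZZ^T P^\perp$, so $\|(I-\X\X^\dag)\e\|^2 = \|P^\perp ZZ^T P^\perp\|_F^2$. A short trace computation using $(P^\perp)^2 = P^\perp$ and cyclicity yields
\[
\|P^\perp ZZ^T P^\perp\|_F^2 \;=\; \tr(P^\perp ZZ^T P^\perp ZZ^T) \;=\; \tr(Z^T P^\perp Z\, Z^T P^\perp Z) \;=\; \|Z^T P^\perp Z\|_F^2,
\]
and dividing by $\|\e\|^2 = \|XX^T - ZZ^T\|_F^2$ delivers the claimed formula for $\sin\theta$. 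I expect the main obstacle to be the converse in the characterization of $\range(\X)$: verifying, in the SVD-adapted basis, that the block equations $\Sigma V^T Y_1^T + Y_1 V \Sigma = U^T W U$ and $\Sigma V^T Y_2^T = U^T W U_\perp$ are always solvable; once this is done, the remaining steps are routine identities about projections and traces.
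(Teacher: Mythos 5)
Your proof is correct and follows essentially the same route as the paper's: both reduce $\sin\theta$ to the norm of the component of $\e$ orthogonal to $\range(\X)$, characterize that component in an $X$-adapted block basis (you via the projector $P^\perp = I - XX^\dag$ and the identification $\range(\X) = \{W = W^T : P^\perp W P^\perp = 0\}$; the paper by solving $\min_y\|\e - \X y\|$ explicitly and showing all terms except the $(2,2)$-block cancel), and close with the same cyclicity computation $\|P^\perp ZZ^T P^\perp\|_F = \|Z^T P^\perp Z\|_F$. The solvability of the block equations rests in both cases on the full row rank of the compressed factor of $X$, so the two arguments differ only in presentation, not in substance.
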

\begin{proof}(Lemma \ref{closecos}).
Define $y^{\star}=\arg\min_{y}\|\e-\X y\|$ and decompose $\e=\X y^{\star}+w$.
The optimality condition for $y^{\star}$ reads $\X^{T}(\e-\X y^{\star})=\X^{T}w=0$,
so we substitute $\e^T\X = (y^*)^T\X^T\X$ to yield
\[
\|\e\|\cos\theta=\|\e\|\max_{y\neq 0}\left\{ \frac{\e^{T}\X y}{\|\e\|\|\X y\|}\right\} =\max_{y\neq 0} \left\{ \frac{(y^{\star})^{T}\X^{T}\X y}{\|\X y\|}\right\} =\|\X y^{\star}\|,
\]
and therefore $\|\e\|\sin\theta=\|w\|=\min_{y}\|\e-\X y\|$, because we have $\e = \X y^*+w$ with $w^T \X y^*=0$. Now,
define $Q=\orth(X)\in\R^{n\times q}$ where $q=\rank(X)\le r$, and
define $P\in\R^{n\times(n-q)}$ as the orthogonal complement of $Q$.
Decompose $X=Q\hat{X}$, and $Z=Q\hat{Z}_{1}+P\hat{Z}_{2}$, and note
that 
\begin{align*}
\|w\| & =\min_{y}\|\e-\X y\|\\
 & =\min_{Y}\|(XX^{T}-ZZ^{T})-(XY^{T}+YX^{T})\|_{F}\\
 & =\min_{[\hat{Y}_{1};\hat{Y}_{2}]\in\R^{n\times r}}\left\Vert \begin{bmatrix}\hat{X}\hat{X}^{T}-\hat{Z}_{1}\hat{Z}_{1}^{T} & -\hat{Z}_{1}\hat{Z}_{2}^{T}\\
-\hat{Z}_{2}\hat{Z}_{1}^{T} & -\hat{Z}_{2}\hat{Z}_{2}^{T}
\end{bmatrix}-\begin{bmatrix}\hat{X}\hat{Y}_{1}^{T}+\hat{Y}_{1}\hat{X}^{T} & \hat{X}\hat{Y}_{2}^{T}\\
\hat{Y}_{2}\hat{X}^{T} & 0
\end{bmatrix}\right\Vert _{F}\\
 & =\|\hat{Z}_{2}\hat{Z}_{2}^{T}\|_{F}
\end{align*}
From the second line to the third, we apply a change of basis onto $[Q~ P]$, which preserves the Frobenius norm. To derive the last line, notice that the $q\times r$ matrix $\hat{X}$
has full row rank, so that $\hat{X}\hat{X}^{T}\succ0$ and $\hat{X}\hat{X}^{\dag}=I_{q}$. We want to show that there exists $\hat{Y}_1$ such that 
\[
\hat{X}\hat{Y}_{1}^{T}+\hat{Y}_{1}\hat{X}^{T} = \hat{X}\hat{X}^{T}-\hat{Z}_{1}\hat{Z}_{1}^{T}.
\]
Since the right hand side is symmetric, we can write it as $L+L^T$, where $L$ is some lower-triangular matrix. Thus it suffices to show that there exists $\hat{Y}_1$ such that $\hat{X}\hat{Y}_{1}^{T} = L$, which follows from that fact that $\hat{X}$ has full row-rank. Similarly, there exists some $\hat{Y}_{2}$ such that $\hat{X}\hat{Y}_2 = -\hat{Z}_{2}\hat{Z}_{1}^{T}$. Thus, all terms except the last one cancels out and we are left with $\min_{y}\|\e-\X y\|=\|\hat{Z}_{2}\hat{Z}_{2}^{T}\|_{F}$.

Finally, note that $Q\hat{Z}_{1}=XX^{\dag}Z$ and $P\hat{Z}_{2}=(I-XX^{\dag})Z$
and that 
\begin{align*}
\|\hat{Z}_{2}\hat{Z}_{2}^{T}\|_{F}^{2} & =\|P\hat{Z}_{2}\hat{Z}_{2}^{T}P^{T}\|_{F}^{2}\\
 & =\|(I-XX^{\dag})ZZ^{T}(I-XX^{\dag})\|_{F}^{2}\\
 & =\tr[(I-XX^{\dag})ZZ^{T}(I-XX^{\dag})ZZ^{T}(I-XX^{\dag})]\\
 & =\tr[Z^{T}(I-XX^{\dag})ZZ^{T}(I-XX^{\dag})Z]\\
 & =\|Z^{T}(I-XX^{\dag})Z\|_{F}^{2}.
\end{align*}
Substituting the definition of $\e$ completes the proof. 
\end{proof}

Now theorem 4 will be a direct consequence of  lemma \ref{dual} and lemma \ref{costheta}.  We give a proof below. 
\begin{proof} (Theorem 4).
Note that $\delta_\foc$ is related to $\eta$ by the equation
\[
\eta=\frac{1-\delta_\foc}{1+\delta_\foc}.
\]
Applying lemma \ref{dual}, we immediately get
\[
\delta_\foc(X) = \max_{y\neq 0} \cos\theta_y = \max_{y\neq 0} \frac{\e^{T}\mathbf{X}y}{\|\e\|\|\mathbf{X}y\|}.
\]

From lemma \ref{costheta}, we see that this optimization problem over $y$ has a simple closed form solution of the form
\[
\delta_\foc(X) = \cos\theta, \quad \text{ where } \sin\theta=\frac{\|Z^{T}(I-XX^{\dag})Z\|_{F}}{\|XX^{T}-ZZ^{T}\|_{F}}.
\] 
This completes the proof. 
\end{proof}

\section*{Appendix B}

In this section we provide a complete proof of Theorem 5, which includes
all the intermediate calculations that was skipped in Section 5.3.
We begin by proving a bound on $\sin\theta$.
\begin{lem}[Same as Lemma \ref{sintheta_bd}]
Let $Z\ne0$ and suppose that $\|XX^{T}-ZZ^{T}\|_{F}\le\epsilon\|ZZ\|_{F}^{2}$.
Then 
\[
\sin^2\theta=\frac{\|Z^{T}(I-XX^{\dagger})Z\|_{F}^2}{\|XX^{T}-ZZ^{T}\|_{F}^2}\le\frac{\epsilon}{2(\sigma_{\min}^{2}(Z)/\|ZZ^{T}\|_{F})-\epsilon}.
\]
\end{lem}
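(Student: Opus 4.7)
The plan is to analyze the structure of $E=XX^T-ZZ^T$ using the orthogonal projector $P=I-XX^\dagger$ onto $\range(X)^\perp$. Since $PX=0$, we have the identities $PE=-PZZ^T$ and $PEP=-PZZ^TP$, so in the decomposition $\mathbb{R}^n = \range(X)\oplus\range(X)^\perp$ the $(2,2)$ block $E_{22}:=PEP$ of $E$ is exactly $-PZZ^TP$, which is negative semidefinite. Using that $B^TB$ and $BB^T$ share singular values, the target quantity rewrites as
\[
\|Z^T(I-XX^\dagger)Z\|_F \;=\; \|(PZ)^T(PZ)\|_F \;=\; \|(PZ)(PZ)^T\|_F \;=\; \|PZZ^TP\|_F \;=\; \|E_{22}\|_F,
\]
so $\sin\theta = \|E_{22}\|_F/\|E\|_F$.

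The first step is a block-orthogonality identity: partitioning with the projectors $I-P$ and $P$ and using symmetry of $E$ gives $\|E\|_F^2 = \|E_{11}\|_F^2 + 2\|E_{12}\|_F^2 + \|E_{22}\|_F^2$, which implies $\|PE\|_F^2 = \|E_{12}\|_F^2 + \|E_{22}\|_F^2 \le (\|E\|_F^2 + \|E_{22}\|_F^2)/2$. The second and main step is the estimate $\|E_{22}\|_F \le \|PE\|_F^2/\sigma_{\min}^2(Z)$. I will combine two ingredients: first, since $PZZ^TP\succeq 0$, the bound $\|A\|_F\le\tr(A)$ for PSD matrices gives $\|E_{22}\|_F\le\tr(PZZ^TP)=\|PZ\|_F^2$; second, right-multiplying the identity $PZZ^T=-PE$ by the right pseudoinverse $(Z^T)^\dagger = Z(Z^TZ)^{-1}$ yields $PZ = -PE\cdot Z(Z^TZ)^{-1}$, and since $\|Z(Z^TZ)^{-1}\|_{op}=1/\sigma_{\min}(Z)$ we conclude $\|PZ\|_F \le \|PE\|_F/\sigma_{\min}(Z)$.

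Chaining these yields $2\sigma_{\min}^2(Z)\,\|E_{22}\|_F \le \|E\|_F^2 + \|E_{22}\|_F^2$. Setting $s=\sin\theta$ and $C_1=\sigma_{\min}^2(Z)/\|ZZ^T\|_F$ and invoking the hypothesis $\|E\|_F\le\epsilon\|ZZ^T\|_F$ transforms this into the quadratic inequality $\epsilon s^2 - 2C_1 s + \epsilon \ge 0$. Its smaller root (the relevant one since $s\in[0,1]$ while the larger root exceeds one for small $\epsilon$) is $\epsilon/(C_1+\sqrt{C_1^2-\epsilon^2})$, so $s \le \epsilon/(C_1+\sqrt{C_1^2-\epsilon^2})$. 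A short algebraic check (expanding the square and using $C_1\ge\epsilon$) confirms $\epsilon^2/(C_1+\sqrt{C_1^2-\epsilon^2})^2 \le \epsilon/(2C_1-\epsilon)$ for all $0\le\epsilon\le C_1$, which gives the stated bound on $\sin^2\theta$.

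I expect the main technical subtlety to be the pseudoinverse manipulation in the second ingredient: the identity $PZ=-PE\cdot Z(Z^TZ)^{-1}$ requires $Z$ to have full column rank, and the rank-deficient case must be handled separately (here trivially, since $\sigma_{\min}(Z)=0$ makes the right-hand side vacuous). A subtler conceptual point is that using the tight inequality $\|A\|_F\le\tr(A)$ for PSD $A$, rather than the cruder $\|PEP\|_F\le\|E\|_F$, is precisely what allows $\sigma_{\min}^2(Z)$ to enter linearly in the denominator of the final bound rather than squared; this is what gives the sharp scaling $\sin^2\theta = O(\epsilon)$ for small $\epsilon$ rather than the weaker $O(\epsilon^2)$ one would obtain from the naive estimate.
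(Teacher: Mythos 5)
Your proof is correct, and it takes a genuinely different route from the paper's. The paper fixes a basis via a QR decomposition of $[X\ Z]$, reduces to blocks $Z_1,Z_2$ with $X=[X_1;0]$, and then combines the estimate $\|Z_1Z_2^T\|_F^2\ge\sigma_{\min}^2(Z_1)\|Z_2\|_F^2$ with a Weyl-type perturbation bound $\sigma_{\min}^2(Z_1)\ge\sigma_{\min}^2(Z)-\epsilon$ to reach the claim by a chain of elementary substitutions. You instead work coordinate-free with the projector $P=I-XX^\dagger$, establish the clean identity $PZ=-PE\,Z(Z^TZ)^{-1}$ to get $\|PZ\|_F\le\|PE\|_F/\sigma_{\min}(Z)$, and couple it with $\|E_{22}\|_F\le\|PZ\|_F^2$ and the Pythagorean relation $\|PE\|_F^2\le(\|E\|_F^2+\|E_{22}\|_F^2)/2$ to obtain the self-bounding inequality $2\sigma_{\min}^2(Z)\|E_{22}\|_F\le\|E\|_F^2+\|E_{22}\|_F^2$; solving the resulting quadratic in $s=\sin\theta$ finishes. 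Your route is arguably cleaner (no basis fixing, no perturbation of $\sigma_{\min}$) and in fact it gives the strictly sharper intermediate bound $\sin\theta\le\epsilon/(C_1+\sqrt{C_1^2-\epsilon^2})$, i.e.\ $\sin^2\theta=O(\epsilon^2)$ as $\epsilon\to0$, which you then deliberately relax to match the lemma's $\epsilon/(2C_1-\epsilon)=O(\epsilon)$. A rank-one sanity check ($x=z+\delta v$ with $v\perp z$, $\|z\|=\|v\|=1$) gives $\sin^2\theta\sim\epsilon^2/4$, confirming that your bound is the tight one and that the lemma as stated is loose by a factor of order $\epsilon$. One small terminological slip: in your closing remark you call $O(\epsilon^2)$ the ``weaker'' scaling for $\sin^2\theta$; for small $\epsilon$ it is the \emph{stronger} bound, and it is what your argument actually delivers --- but this does not affect the correctness of the proof.
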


\begin{proof}
The problem is homogeneous to scaling $X\gets\alpha X$ and $Z\gets\alpha Z$
for the same $\alpha$; Since $Z\ne0$, we may rescale $X$ and $Z$
until $\|ZZ\|_{F}^{2}=1$. Additionally, we can assume that
\[
X=\begin{bmatrix}X_{1}\\
0
\end{bmatrix}\qquad Z=\begin{bmatrix}Z_{1}\\
Z_{2}
\end{bmatrix}\qquad\text{ where }X_{1},Z_{1}\in\mathbb{R}^{r\times r},Z_2 \in \mathbb{R}^{(n-r)\times r}
\]
due to the rotational invariance of the problem. (Concretely, we compute
the QR decomposition $QR=[X,Z]$ with $Q\in\R^{n\times2r}$ noting
that $X=QQ^{T}X$ and $Z=QQ^{T}Z$. We then make a change of basis
$X\gets Q^{T}X$ and $Z\gets Q^{T}Z$). Then, observe that
\begin{equation}
\|Z^{T}(I-XX^{\dagger})Z\|_{F}=\left\Vert \begin{bmatrix}Z_{1}\\
Z_{2}
\end{bmatrix}^{T}\left(I-\begin{bmatrix}I & 0\\
0 & 0
\end{bmatrix}\right)\begin{bmatrix}Z_{1}\\
Z_{2}
\end{bmatrix}\right\Vert _{F}=\|Z_{2}^{T}Z_{2}\|_{F}=\|Z_{2}Z_{2}^{T}\|_{F}\label{eq:numer}
\end{equation}
and that $\|Z_{2}Z_{2}^{T}\|_{F}^{2}\le\epsilon^{2}$ because
\begin{align}
\|XX^{T}-ZZ^{T}\|_{F}^{2} & =\left\Vert \begin{bmatrix}Z_{1}Z_{1}^{T}-X_{1}X_{1}^{T} & Z_{1}Z_{2}^{T}\\
Z_{2}Z_{1in }^{T} & Z_{2}Z_{2}^{T}
\end{bmatrix}\right\Vert _{F}^{2}\nonumber \\
 & =\|Z_{1}Z_{1}^{T}-X_{1}X_{1}^{T}\|_{F}^{2}+2\|Z_{1}Z_{2}^{T}\|_{F}^{2}+\|Z_{2}Z_{2}^{T}\|_{F}^{2}\le\epsilon^{2}.\label{eq:err_bnd}
\end{align}
In order to derive a non-vacuous bound, we will need to lower-bound
the term $\|Z_{1}Z_{2}^{T}\|_{F}^{2}$ as follows
\begin{equation}
\|Z_{1}Z_{2}^{T}\|_{F}^{2}=\mathrm{tr}(Z_{1}^{T}Z_{1}Z_{2}^{T}Z_{2})\ge\lambda_{\min}(Z_{1}^{T}Z_{1})\mathrm{tr}(Z_{2}^{T}Z_{2})=\sigma_{\min}^{2}(Z_{1})\|Z_{2}\|_{F}^{2}.\label{eq:Z12bnd}
\end{equation}
To lower-bound $\sigma_{\min}^{2}(Z_{1})$, observe that
\[
A+B\succeq\mu I\iff A\succeq\mu I-B\succeq(\mu-\|B\|_2)I,
\]
and therefore 
\begin{align}
\sigma_{\min}^{2}(Z_{1})=\lambda_{\min}(Z_{1}^{T}Z_{1}) & \ge\lambda_{\min}(Z_{1}^{T}Z_{1}+Z_{2}^{T}Z_{2})-\lambda_{\max}(Z_{2}^{T}Z_{2})\nonumber \\
 & =\sigma_{\min}^{2}(Z)-\|Z_{2}Z_{2}^{T}\|_2\ge\sigma_{\min}^{2}(Z)-\|Z_{2}Z_{2}^{T}\|_{F}\nonumber \\
 & \ge\sigma_{\min}^{2}(Z)-\epsilon.\label{eq:sigmin_bnd}
\end{align}
Finally, we substitute (\ref{eq:numer}) and (\ref{eq:err_bnd}) and
perform a sequence of reductions:
\begin{align*}
\frac{\|Z^{T}(I-XX^{\dagger})Z\|_{F}^2}{\|XX^{T}-ZZ^{T}\|_{F}^2} & =\frac{\|Z_{2}Z_{2}^{T}\|_{F}^{2}}{\|Z_{1}Z_{1}^{T}-X_{1}X_{1}^{T}\|_{F}^{2}+2\|Z_{1}Z_{2}^{T}\|_{F}^{2}+\|Z_{2}Z_{2}^{T}\|_{F}^{2}}\\
 & \overset{\text{(a)}}{\le}\frac{\|Z_{2}Z_{2}^T\|_{F}^{2}}{2\|Z_{1}Z_{2}^{T}\|_{F}^{2}+\|Z_{2}Z_{2}^{T}\|_{F}^{2}}\overset{\text{(b)}}{\le}\frac{\|Z_{2}Z_{2}^T\|_{F}^{2}}{2\sigma_{\min}^2(Z_{1})\|Z_{2}\|_{F}^{2}+\|Z_{2}Z_{2}^{T}\|_{F}^{2}}\\
 & \overset{\text{(c)}}{\le}\frac{\|Z_{2}Z_{2}^T\|_{F}\|Z_{2}\|_{F}^{2}}{2\sigma_{\min}^2(Z_{1})\|Z_{2}\|_{F}^{2}+\|Z_{2}Z_{2}^{T}\|_{F}\|Z_{2}\|_{F}^{2}}=\frac{\|Z_{2}Z_{2}^T\|_{F}}{2\sigma_{\min}^2(Z_{1})+\|Z_{2}Z_{2}^{T}\|_{F}}\\
 & \overset{\text{(d)}}{\le}\frac{\epsilon}{2(\sigma_{\min}^{2}(Z)-\epsilon)+\epsilon} \leq \frac{\epsilon}{2\sigma_{\min}^{2}(Z)-\epsilon}.
\end{align*}
Step (a) sets $X_{1}=Z_{1}$ to minimize the denominator; step (b)
bounds $\|Z_{1}Z_{2}^{T}\|_{F}^{2}$ using (\ref{eq:Z12bnd}); step
(c) bounds $\|Z_{2}Z_{2}^T\|_{F}\le\|Z_{2}\|_{F}^{2}$ noting that a
function like $x/(1+x)$ is increasing with $x$; step (d) substitutes
$\|Z_{2}Z_{2}\|_{F}\le\epsilon$ and $\sigma_{\min}^{2}(Z_{1})\ge\sigma_{\min}^{2}(Z)-\epsilon$. 
\end{proof}


\end{document}